%% file: arxiv.tex
\documentclass[]{utexas}
\usepackage[utf8]{inputenc} 
\usepackage[T1]{fontenc}    
\usepackage{hyperref}       
\usepackage{url}            
\usepackage{booktabs}       
\usepackage{amsfonts}       
\usepackage{nicefrac}       
\usepackage{microtype}      
\usepackage{xcolor}         

\usepackage{amsmath}
\usepackage{enumerate} 
\usepackage{algorithm}
\usepackage[noend]{algpseudocode}
\usepackage{amsfonts}
\usepackage{amsthm}
\usepackage{newtxtt}
\usepackage{cleveref}
\usepackage{diagbox} 
\usepackage{colortbl}
\usepackage{nicematrix}
\usepackage{subcaption}
\usepackage{multirow}

\usepackage{tcolorbox}
\usepackage[framemethod=tikz]{mdframed}
\usepackage{amssymb}
\usepackage{xspace}
\usepackage{wrapfig}
\usepackage{adjustbox}
\usepackage{tabularx}
\usepackage{mathtools}
\usepackage{xcolor}         

\usepackage{array}
\usepackage[table]{xcolor}
\usepackage{colortbl}
\usepackage[normalem]{ulem} 
\usepackage{makecell}  
\usepackage{graphicx}

\usepackage{xcolor}

\definecolor{OursGreen}{RGB}{232,245,233}      
\definecolor{ColGray}{RGB}{245,245,245}        
\definecolor{HeaderGray}{RGB}{250,250,250}

\newcommand{\best}[1]{\textbf{#1}}
\newcommand{\second}[1]{\uline{#1}}

\theoremstyle{plain}
\newtheorem{theorem}{Theorem}[section]
\newtheorem{proposition}[theorem]{Proposition}

\newtheorem{corollary}[theorem]{Corollary}
\theoremstyle{definition}

\theoremstyle{remark}

\usepackage[textsize=tiny]{todonotes}

\usepackage{tikz}

\title{TABES: Trajectory-Aware Backward-on-Entropy Steering for Masked Diffusion Models}

\author[1]{Shreshth Saini}
\author[1]{Avinab Saha\textdagger}
\author[2]{Balu Adsumilli}
\author[2]{Neil Birkbeck}
\author[2]{Yilin Wang}
\author[1]{Alan C. Bovik}

\affiliation[1]{The University of Texas at Austin}
\affiliation[2]{Google}

\footnotetext{\textdagger Work done while at UT Austin, now at Google Research.}

\input{sec/abstract}

\date{\today}
\correspondence{Shreshth Saini at \email{saini.2@utexas.edu}}

\begin{document}
\input{math_commands}

\maketitle




\input{sec/introduction}

\input{sec/background}

\input{sec/method}

\input{sec/experiments}

\input{sec/conclusion}


\clearpage
\newpage
\bibliographystyle{assets/plainnat}
\bibliography{refs.bib}


\input{sec/appendix}

\end{document}

%% file: sec/abstract.tex
\abstract{
Masked Diffusion Models (MDMs) have emerged as a promising non-autoregressive paradigm for generative tasks, offering parallel decoding and bidirectional context utilization. However, current sampling methods rely on simple confidence-based heuristics that ignore the long-term impact of local decisions, leading to trajectory lock-in where early hallucinations cascade into global incoherence. While search-based methods mitigate this, they incur prohibitive computational costs ($O(K)$ forward passes per step). In this work, we propose Backward-on-Entropy (BoE) Steering, a gradient-guided inference framework that approximates infinite-horizon lookahead via a single backward pass. We formally derive the Token Influence Score (TIS) from a first-order expansion of the trajectory cost functional, proving that the gradient of future entropy with respect to input embeddings serves as an optimal control signal for minimizing uncertainty. To ensure scalability, we introduce \texttt{ActiveQueryAttention}, a sparse adjoint primitive that exploits the structure of the masking objective to reduce backward pass complexity. BoE achieves a superior Pareto frontier for inference-time scaling compared to existing unmasking methods, demonstrating that gradient-guided steering offers a mathematically principled and efficient path to robust non-autoregressive generation. We will release the code.
}

%% file: math_commands.tex

\newcommand*{\vertbar}{\rule[-0.25ex]{0.5pt}{1.5ex}}
\newcommand*{\horzbar}{\rule[.5ex]{2.5ex}{0.5pt}}
\newcommand{\dd}{\mathrm{d}}
\newcommand{\tkernel}{p}
\newcommand{\action}[2]{\left \langle #1, #2\right \rangle }
\newcommand{\bell}{\mathrm{b}}
\newcommand{\norm}[1]
{\left\Vert#1\right\Vert}
\newcommand{\Norm}[1]{\lvert \! \lvert \! \lvert #1 \rvert \! \rvert \! \rvert}
\newcommand{\abs}[1]{\left\vert#1\right\vert}
\newcommand{\babs}[1]{\Big \vert#1 \Big \vert}
\newcommand{\set}[1]{\left\{#1\right\}}
\newcommand{\parr}[1]{\left (#1\right )}
\newcommand{\brac}[1]{\left [#1\right ]}
\newcommand{\ip}[1]{\left \langle #1 \right \rangle }
\newcommand{\Real}{\mathbb R}
\newcommand{\Nat}{\mathbb N}
\newcommand{\Complex}{\mathbb C}
\newcommand{\eps}{\varepsilon}
\newcommand{\too}{\rightarrow}
\newcommand{\bbar}[1]{\overline{#1}}
\newcommand{\wt}[1]{\widetilde{#1}} 
\newcommand{\wh}[1]{\widehat{#1}} 
\newcommand{\diag}{\textrm{diag}} 
\newcommand{\dist}{d} 
\newcommand{\divv}{\mathrm{div}} 
\newcommand{\vol}{\mathrm{vol}} 
\newcommand{\snr}{\mathrm{snr}}
\newcommand{\logsnr}{\rho}
\newcommand{\trace}{\textrm{tr}} 
\def \bfi{\textbf{\footnotesize{i}}} 
\newcommand{\one}{\mathbf{1}}
\newcommand{\zero}{\mathbf{0}}
\newcommand{\vcc}[1]{\mathrm{vec}(#1)}
\newcommand{\mat}[1]{\bm{[} #1 \bm{]}}
\newcommand{\defe}{\coloneqq}

\definecolor{mygray}{gray}{0.95}
\newcommand{\CM}{\scriptscriptstyle \text{CM}}
\newcommand{\M}{\scriptscriptstyle \text{M}}
\newcommand{\CFM}{\scriptscriptstyle \text{CFM}}
\newcommand{\FM}{\scriptscriptstyle \text{FM}}
\newcommand{\RFM}{\scriptscriptstyle \text{RFM}}
\newcommand{\RCFM}{\scriptscriptstyle \text{RCFM}}
\newcommand{\DFM}{\scriptscriptstyle \text{DFM}}
\newcommand{\CDFM}{\scriptscriptstyle \text{CDFM}}
\newcommand{\GM}{\scriptscriptstyle \text{GM}}
\newcommand{\DSM}{\scriptscriptstyle \text{DSM}}
\newcommand{\CGM}{\scriptscriptstyle \text{CGM}}
\newcommand{\SM}{\scriptscriptstyle \text{SM}}
\newcommand{\NM}{\scriptscriptstyle \text{NM}}
\newcommand{\mask}{\texttt{m}} 
\newcommand{\ignore}{\texttt{i}} 

\def \etal{{et al}.}
\newcommand*{\eg}{{\it e.g.}\@\xspace}
\newcommand*{\ie}{{\it i.e.}\@\xspace}

\makeatletter
\newtheorem*{rep@theorem}{\rep@title}
\newcommand{\newreptheorem}[2]{%
\newenvironment{rep#1}[1]{%
 \def\rep@title{\textbf{#2} \ref{##1}}%
 \begin{rep@theorem}}%
 {\end{rep@theorem}}}
\makeatother


\newreptheorem{theorem}{Theorem}
\newreptheorem{proposition}{Proposition}
\newreptheorem{lemma}{Lemma}
\newreptheorem{corollary}{Corollary}


\newcommand{\figleft}{{\em (Left)}}
\newcommand{\figcenter}{{\em (Center)}}
\newcommand{\figright}{{\em (Right)}}
\newcommand{\figtop}{{\em (Top)}}
\newcommand{\figbottom}{{\em (Bottom)}}
\newcommand{\captiona}{{\em (a)}}
\newcommand{\captionb}{{\em (b)}}
\newcommand{\captionc}{{\em (c)}}
\newcommand{\captiond}{{\em (d)}}

\newcommand{\newterm}[1]{{\bf #1}}

\def\figref#1{figure~\ref{#1}}
\def\Figref#1{Figure~\ref{#1}}
\def\twofigref#1#2{figures \ref{#1} and \ref{#2}}
\def\quadfigref#1#2#3#4{figures \ref{#1}, \ref{#2}, \ref{#3} and \ref{#4}}
\def\secref#1{section~\ref{#1}}
\def\Secref#1{Section~\ref{#1}}
\def\twosecrefs#1#2{sections \ref{#1} and \ref{#2}}
\def\secrefs#1#2#3{sections \ref{#1}, \ref{#2} and \ref{#3}}
\def\eqref#1{equation~\ref{#1}}
\def\Eqref#1{Equation~\ref{#1}}
\def\plaineqref#1{\ref{#1}}
\def\chapref#1{chapter~\ref{#1}}
\def\Chapref#1{Chapter~\ref{#1}}
\def\rangechapref#1#2{chapters\ref{#1}--\ref{#2}}
\def\algref#1{algorithm~\ref{#1}}
\def\Algref#1{Algorithm~\ref{#1}}
\def\twoalgref#1#2{algorithms \ref{#1} and \ref{#2}}
\def\Twoalgref#1#2{Algorithms \ref{#1} and \ref{#2}}
\def\partref#1{part~\ref{#1}}
\def\Partref#1{Part~\ref{#1}}
\def\twopartref#1#2{parts \ref{#1} and \ref{#2}}

\def\ceil#1{\lceil #1 \rceil}
\def\floor#1{\lfloor #1 \rfloor}
\def\1{\bm{1}}
\newcommand{\train}{\mathcal{D}}
\newcommand{\valid}{\mathcal{D_{\mathrm{valid}}}}
\newcommand{\test}{\mathcal{D_{\mathrm{test}}}}

\def\eps{{\epsilon}}




\def\reta{{\textnormal{$\eta$}}}
\def\ra{{\textnormal{a}}}
\def\rb{{\textnormal{b}}}
\def\rc{{\textnormal{c}}}
\def\rd{{\textnormal{d}}}
\def\re{{\textnormal{e}}}
\def\rf{{\textnormal{f}}}
\def\rg{{\textnormal{g}}}
\def\rh{{\textnormal{h}}}
\def\ri{{\textnormal{i}}}
\def\rj{{\textnormal{j}}}
\def\rk{{\textnormal{k}}}
\def\rl{{\textnormal{l}}}
\def\rn{{\textnormal{n}}}
\def\ro{{\textnormal{o}}}
\def\rp{{\textnormal{p}}}
\def\rq{{\textnormal{q}}}
\def\rr{{\textnormal{r}}}
\def\rs{{\textnormal{s}}}
\def\rt{{\textnormal{t}}}
\def\ru{{\textnormal{u}}}
\def\rv{{\textnormal{v}}}
\def\rw{{\textnormal{w}}}
\def\rx{{\textnormal{x}}}
\def\ry{{\textnormal{y}}}
\def\rz{{\textnormal{z}}}

\def\rvepsilon{{\mathbf{\epsilon}}}
\def\rvtheta{{\mathbf{\theta}}}
\def\rva{{\mathbf{a}}}
\def\rvb{{\mathbf{b}}}
\def\rvc{{\mathbf{c}}}
\def\rvd{{\mathbf{d}}}
\def\rve{{\mathbf{e}}}
\def\rvf{{\mathbf{f}}}
\def\rvg{{\mathbf{g}}}
\def\rvh{{\mathbf{h}}}
\def\rvu{{\mathbf{i}}}
\def\rvj{{\mathbf{j}}}
\def\rvk{{\mathbf{k}}}
\def\rvl{{\mathbf{l}}}
\def\rvm{{\mathbf{m}}}
\def\rvn{{\mathbf{n}}}
\def\rvo{{\mathbf{o}}}
\def\rvp{{\mathbf{p}}}
\def\rvq{{\mathbf{q}}}
\def\rvr{{\mathbf{r}}}
\def\rvs{{\mathbf{s}}}
\def\rvt{{\mathbf{t}}}
\def\rvu{{\mathbf{u}}}
\def\rvv{{\mathbf{v}}}
\def\rvw{{\mathbf{w}}}
\def\rvx{{\mathbf{x}}}
\def\rvy{{\mathbf{y}}}
\def\rvz{{\mathbf{z}}}

\def\erva{{\textnormal{a}}}
\def\ervb{{\textnormal{b}}}
\def\ervc{{\textnormal{c}}}
\def\ervd{{\textnormal{d}}}
\def\erve{{\textnormal{e}}}
\def\ervf{{\textnormal{f}}}
\def\ervg{{\textnormal{g}}}
\def\ervh{{\textnormal{h}}}
\def\ervi{{\textnormal{i}}}
\def\ervj{{\textnormal{j}}}
\def\ervk{{\textnormal{k}}}
\def\ervl{{\textnormal{l}}}
\def\ervm{{\textnormal{m}}}
\def\ervn{{\textnormal{n}}}
\def\ervo{{\textnormal{o}}}
\def\ervp{{\textnormal{p}}}
\def\ervq{{\textnormal{q}}}
\def\ervr{{\textnormal{r}}}
\def\ervs{{\textnormal{s}}}
\def\ervt{{\textnormal{t}}}
\def\ervu{{\textnormal{u}}}
\def\ervv{{\textnormal{v}}}
\def\ervw{{\textnormal{w}}}
\def\ervx{{\textnormal{x}}}
\def\ervy{{\textnormal{y}}}
\def\ervz{{\textnormal{z}}}

\def\rmA{{\mathbf{A}}}
\def\rmB{{\mathbf{B}}}
\def\rmC{{\mathbf{C}}}
\def\rmD{{\mathbf{D}}}
\def\rmE{{\mathbf{E}}}
\def\rmF{{\mathbf{F}}}
\def\rmG{{\mathbf{G}}}
\def\rmH{{\mathbf{H}}}
\def\rmI{{\mathbf{I}}}
\def\rmJ{{\mathbf{J}}}
\def\rmK{{\mathbf{K}}}
\def\rmL{{\mathbf{L}}}
\def\rmM{{\mathbf{M}}}
\def\rmN{{\mathbf{N}}}
\def\rmO{{\mathbf{O}}}
\def\rmP{{\mathbf{P}}}
\def\rmQ{{\mathbf{Q}}}
\def\rmR{{\mathbf{R}}}
\def\rmS{{\mathbf{S}}}
\def\rmT{{\mathbf{T}}}
\def\rmU{{\mathbf{U}}}
\def\rmV{{\mathbf{V}}}
\def\rmW{{\mathbf{W}}}
\def\rmX{{\mathbf{X}}}
\def\rmY{{\mathbf{Y}}}
\def\rmZ{{\mathbf{Z}}}

\def\ermA{{\textnormal{A}}}
\def\ermB{{\textnormal{B}}}
\def\ermC{{\textnormal{C}}}
\def\ermD{{\textnormal{D}}}
\def\ermE{{\textnormal{E}}}
\def\ermF{{\textnormal{F}}}
\def\ermG{{\textnormal{G}}}
\def\ermH{{\textnormal{H}}}
\def\ermI{{\textnormal{I}}}
\def\ermJ{{\textnormal{J}}}
\def\ermK{{\textnormal{K}}}
\def\ermL{{\textnormal{L}}}
\def\ermM{{\textnormal{M}}}
\def\ermN{{\textnormal{N}}}
\def\ermO{{\textnormal{O}}}
\def\ermP{{\textnormal{P}}}
\def\ermQ{{\textnormal{Q}}}
\def\ermR{{\textnormal{R}}}
\def\ermS{{\textnormal{S}}}
\def\ermT{{\textnormal{T}}}
\def\ermU{{\textnormal{U}}}
\def\ermV{{\textnormal{V}}}
\def\ermW{{\textnormal{W}}}
\def\ermX{{\textnormal{X}}}
\def\ermY{{\textnormal{Y}}}
\def\ermZ{{\textnormal{Z}}}

\def\vzero{{\bm{0}}}
\def\vone{{\bm{1}}}
\def\vmu{{\bm{\mu}}}
\def\vtheta{{\bm{\theta}}}
\def\va{{\bm{a}}}
\def\vb{{\bm{b}}}
\def\vc{{\bm{c}}}
\def\vd{{\bm{d}}}
\def\ve{{\bm{e}}}
\def\vf{{\bm{f}}}
\def\vg{{\bm{g}}}
\def\vh{{\bm{h}}}
\def\vi{{\bm{i}}}
\def\vj{{\bm{j}}}
\def\vk{{\bm{k}}}
\def\vl{{\bm{l}}}
\def\vm{{\bm{m}}}
\def\vn{{\bm{n}}}
\def\vo{{\bm{o}}}
\def\vp{{\bm{p}}}
\def\vq{{\bm{q}}}
\def\vr{{\bm{r}}}
\def\vs{{\bm{s}}}
\def\vt{{\bm{t}}}
\def\vu{{\bm{u}}}
\def\vv{{\bm{v}}}
\def\vw{{\bm{w}}}
\def\vx{{\bm{x}}}
\def\vy{{\bm{y}}}
\def\vz{{\bm{z}}}
\def\valpha{{\bm{\alpha}}}

\def\evalpha{{\alpha}}
\def\evbeta{{\beta}}
\def\evepsilon{{\epsilon}}
\def\evlambda{{\lambda}}
\def\evomega{{\omega}}
\def\evmu{{\mu}}
\def\evpsi{{\psi}}
\def\evsigma{{\sigma}}
\def\evtheta{{\theta}}
\def\eva{{a}}
\def\evb{{b}}
\def\evc{{c}}
\def\evd{{d}}
\def\eve{{e}}
\def\evf{{f}}
\def\evg{{g}}
\def\evh{{h}}
\def\evi{{i}}
\def\evj{{j}}
\def\evk{{k}}
\def\evl{{l}}
\def\evm{{m}}
\def\evn{{n}}
\def\evo{{o}}
\def\evp{{p}}
\def\evq{{q}}
\def\evr{{r}}
\def\evs{{s}}
\def\evt{{t}}
\def\evu{{u}}
\def\evv{{v}}
\def\evw{{w}}
\def\evx{{x}}
\def\evy{{y}}
\def\evz{{z}}

\def\mA{{\bm{A}}}
\def\mB{{\bm{B}}}
\def\mC{{\bm{C}}}
\def\mD{{\bm{D}}}
\def\mE{{\bm{E}}}
\def\mF{{\bm{F}}}
\def\mG{{\bm{G}}}
\def\mH{{\bm{H}}}
\def\mI{{\bm{I}}}
\def\mJ{{\bm{J}}}
\def\mK{{\bm{K}}}
\def\mL{{\bm{L}}}
\def\mM{{\bm{M}}}
\def\mN{{\bm{N}}}
\def\mO{{\bm{O}}}
\def\mP{{\bm{P}}}
\def\mQ{{\bm{Q}}}
\def\mR{{\bm{R}}}
\def\mS{{\bm{S}}}
\def\mT{{\bm{T}}}
\def\mU{{\bm{U}}}
\def\mV{{\bm{V}}}
\def\mW{{\bm{W}}}
\def\mX{{\bm{X}}}
\def\mY{{\bm{Y}}}
\def\mZ{{\bm{Z}}}
\def\mBeta{{\bm{\beta}}}
\def\mPhi{{\bm{\Phi}}}
\def\mphi{\bm{\phi}}
\def\mPsi{{\bm{\Psi}}}
\def\mpsi{\bm{\psi}}
\def\mLambda{{\bm{\Lambda}}}
\def\mSigma{{\bm{\Sigma}}}

\newcommand{\tens}[1]{\bm{\mathsfit{#1}}}
\def\tA{{\tens{A}}}
\def\tB{{\tens{B}}}
\def\tC{{\tens{C}}}
\def\tD{{\tens{D}}}
\def\tE{{\tens{E}}}
\def\tF{{\tens{F}}}
\def\tG{{\tens{G}}}
\def\tH{{\tens{H}}}
\def\tI{{\tens{I}}}
\def\tJ{{\tens{J}}}
\def\tK{{\tens{K}}}
\def\tL{{\tens{L}}}
\def\tM{{\tens{M}}}
\def\tN{{\tens{N}}}
\def\tO{{\tens{O}}}
\def\tP{{\tens{P}}}
\def\tQ{{\tens{Q}}}
\def\tR{{\tens{R}}}
\def\tS{{\tens{S}}}
\def\tT{{\tens{T}}}
\def\tU{{\tens{U}}}
\def\tV{{\tens{V}}}
\def\tW{{\tens{W}}}
\def\tX{{\tens{X}}}
\def\tY{{\tens{Y}}}
\def\tZ{{\tens{Z}}}

\def\gA{{\mathcal{A}}}
\def\gB{{\mathcal{B}}}
\def\gC{{\mathcal{C}}}
\def\gD{{\mathcal{D}}}
\def\gE{{\mathcal{E}}}
\def\gF{{\mathcal{F}}}
\def\gG{{\mathcal{G}}}
\def\gH{{\mathcal{H}}}
\def\gI{{\mathcal{I}}}
\def\gJ{{\mathcal{J}}}
\def\gK{{\mathcal{K}}}
\def\gL{{\mathcal{L}}}
\def\gM{{\mathcal{M}}}
\def\gN{{\mathcal{N}}}
\def\gO{{\mathcal{O}}}
\def\gP{{\mathcal{P}}}
\def\gQ{{\mathcal{Q}}}
\def\gR{{\mathcal{R}}}
\def\gS{{\mathcal{S}}}
\def\gT{{\mathcal{T}}}
\def\gU{{\mathcal{U}}}
\def\gV{{\mathcal{V}}}
\def\gW{{\mathcal{W}}}
\def\gX{{\mathcal{X}}}
\def\gY{{\mathcal{Y}}}
\def\gZ{{\mathcal{Z}}}

\def\sA{{\mathbb{A}}}
\def\sB{{\mathbb{B}}}
\def\sC{{\mathbb{C}}}
\def\sD{{\mathbb{D}}}
\def\sF{{\mathbb{F}}}
\def\sG{{\mathbb{G}}}
\def\sH{{\mathbb{H}}}
\def\sI{{\mathbb{I}}}
\def\sJ{{\mathbb{J}}}
\def\sK{{\mathbb{K}}}
\def\sL{{\mathbb{L}}}
\def\sM{{\mathbb{M}}}
\def\sN{{\mathbb{N}}}
\def\sO{{\mathbb{O}}}
\def\sP{{\mathbb{P}}}
\def\sQ{{\mathbb{Q}}}
\def\sR{{\mathbb{R}}}
\def\sS{{\mathbb{S}}}
\def\sT{{\mathbb{T}}}
\def\sU{{\mathbb{U}}}
\def\sV{{\mathbb{V}}}
\def\sW{{\mathbb{W}}}
\def\sX{{\mathbb{X}}}
\def\sY{{\mathbb{Y}}}
\def\sZ{{\mathbb{Z}}}

\def\emLambda{{\Lambda}}
\def\emA{{A}}
\def\emB{{B}}
\def\emC{{C}}
\def\emD{{D}}
\def\emE{{E}}
\def\emF{{F}}
\def\emG{{G}}
\def\emH{{H}}
\def\emI{{I}}
\def\emJ{{J}}
\def\emK{{K}}
\def\emL{{L}}
\def\emM{{M}}
\def\emN{{N}}
\def\emO{{O}}
\def\emP{{P}}
\def\emQ{{Q}}
\def\emR{{R}}
\def\emS{{S}}
\def\emT{{T}}
\def\emU{{U}}
\def\emV{{V}}
\def\emW{{W}}
\def\emX{{X}}
\def\emY{{Y}}
\def\emZ{{Z}}
\def\emSigma{{\Sigma}}

\newcommand{\etens}[1]{\mathsfit{#1}}
\def\etLambda{{\etens{\Lambda}}}
\def\etA{{\etens{A}}}
\def\etB{{\etens{B}}}
\def\etC{{\etens{C}}}
\def\etD{{\etens{D}}}
\def\etE{{\etens{E}}}
\def\etF{{\etens{F}}}
\def\etG{{\etens{G}}}
\def\etH{{\etens{H}}}
\def\etI{{\etens{I}}}
\def\etJ{{\etens{J}}}
\def\etK{{\etens{K}}}
\def\etL{{\etens{L}}}
\def\etM{{\etens{M}}}
\def\etN{{\etens{N}}}
\def\etO{{\etens{O}}}
\def\etP{{\etens{P}}}
\def\etQ{{\etens{Q}}}
\def\etR{{\etens{R}}}
\def\etS{{\etens{S}}}
\def\etT{{\etens{T}}}
\def\etU{{\etens{U}}}
\def\etV{{\etens{V}}}
\def\etW{{\etens{W}}}
\def\etX{{\etens{X}}}
\def\etY{{\etens{Y}}}
\def\etZ{{\etens{Z}}}

\newcommand{\pdata}{p_{\rm{data}}}
\newcommand{\ptrain}{\hat{p}_{\rm{data}}}
\newcommand{\Ptrain}{\hat{P}_{\rm{data}}}
\newcommand{\pmodel}{p_{\rm{model}}}
\newcommand{\Pmodel}{P_{\rm{model}}}
\newcommand{\ptildemodel}{\tilde{p}_{\rm{model}}}
\newcommand{\pencode}{p_{\rm{encoder}}}
\newcommand{\pdecode}{p_{\rm{decoder}}}
\newcommand{\precons}{p_{\rm{reconstruct}}}

\newcommand{\laplace}{\mathrm{Laplace}} 

\newcommand{\E}{\mathbb{E}}
\newcommand{\Ls}{\mathcal{L}}
\newcommand{\R}{\mathbb{R}}
\newcommand{\emp}{\tilde{p}}
\newcommand{\lr}{\alpha}
\newcommand{\reg}{\lambda}
\newcommand{\rect}{\mathrm{rectifier}}
\newcommand{\softmax}{\mathrm{softmax}}
\newcommand{\sigmoid}{\sigma}
\newcommand{\softplus}{\zeta}
\newcommand{\KL}{D_{\mathrm{KL}}}
\newcommand{\Var}{\mathrm{Var}}
\newcommand{\standarderror}{\mathrm{SE}}
\newcommand{\Cov}{\mathrm{Cov}}
\newcommand{\normlzero}{L^0}
\newcommand{\normlone}{L^1}
\newcommand{\normltwo}{L^2}
\newcommand{\normlp}{L^p}
\newcommand{\normmax}{L^\infty}

\newcommand{\parents}{Pa} 

\let\ab\allowbreak

\newcolumntype{C}[1]{>{\Centering}m{#1}}
\renewcommand\tabularxcolumn[1]{C{#1}}
\newcolumntype{Z}[1]{>{\Left}m{#1}}
\renewcommand\tabularxcolumn[1]{Z{#1}}

%% file: sec/introduction.tex

\begin{figure*}[t]
    \centering
    \includegraphics[width=0.9\textwidth]{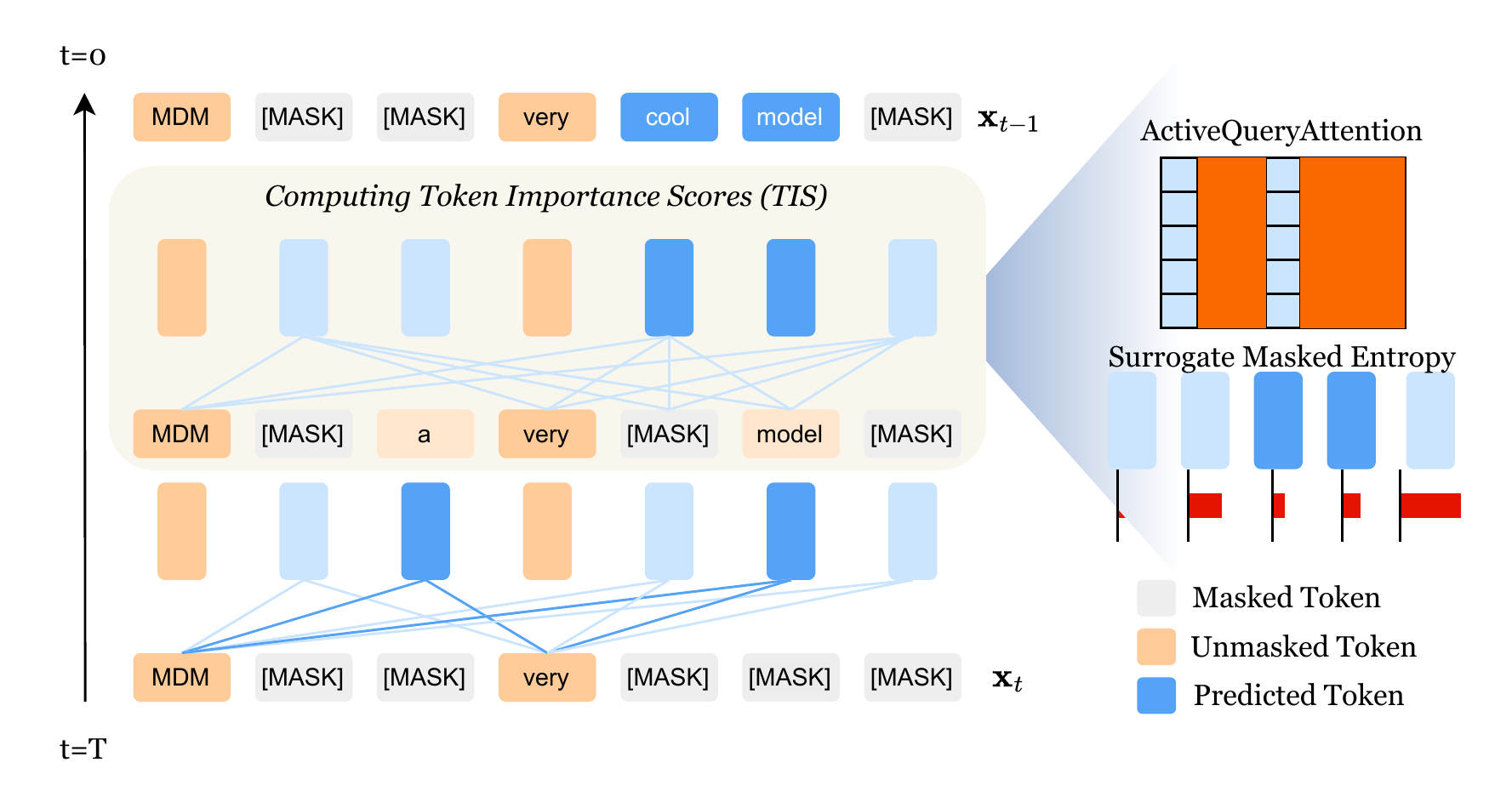}
    \caption{\textbf{Trajectory-aware sampling via Backward-on-Entropy steering.}
    Unlike heuristic schedules that unmask tokens with low current entropy, BoE selects positions that maximize expected reduction in future masked entropy, approximated with a single backward signal. \texttt{ActiveQueryAttention} restricts the backward computation to active positions, preserving inference efficiency.}
    \label{fig:boe_overview}
\end{figure*}

\section{Introduction}
\label{sec:intro}

The dominant paradigm in large-scale sequence generation is autoregressive (AR) modeling~\citep{achiam2023gpt, brown2020language, chowdhery2023palm, raffel2020exploring, team2024gemma, llama, llama2, yang2025qwen3}, which factorizes the joint distribution of a length-$L$ sequence $x=(x_1,\dots,x_L)$ as
\begin{equation}
p(x) \;=\; \prod_{t=1}^{L} p(x_t \mid x_{<t}).
\label{eq:ar_factorization}
\end{equation}
This causal factorization is a strong inductive bias for local consistency, and it supports mature inference accelerators such as KV caching~\citep{kv-survey, zeng2024context}. However, it also imposes an inherent sequential dependency during decoding, it prevents revision of earlier tokens after later context becomes available, suffers from linear latency scaling, and the reversal curse which limits reasoning capabilities~\citep{berglund2023reversal, golovneva2024reverse}. These limitations are especially consequential for long-horizon reasoning, where early commitments can be globally inconsistent yet locally plausible~\citep{havrilla2024glore}.

Masked Diffusion Models (MDMs)~\citep{mdlm, llada1.5, llada, ye2025dream} offer a fundamentally different route. Rather than committing to a fixed left-to-right order, MDMs define a noising process on partially observed sequences and learn to iteratively denoise from masked states toward complete sequences~\citep{llada,remdm,ebsampler}. This enables non-causal conditioning and, in principle, parallel token updates. As a result, MDMs are appealing for latency-sensitive deployment and for tasks where bidirectional context can repair earlier uncertainty.

Standard greedy strategies, such as the cosine schedule used in MaskGIT~\citep{maskgit}, unmask tokens based solely on local confidence $p(x_i | \mathbf{x}_t)$. We argue that this approach is heuristic as it prioritizes tokens that are locally stable but potentially structurally destabilizing~\citep{lookum, pathp2, gibbs-particle}. For instance, in a mathematical proof, choosing a locally plausible but incorrect intermediate step can render the entire subsequent proof incoherent, a phenomenon we term Trajectory Lock-in.

\textbf{Key observation.}
In MDM decoding, the choice of which token to unmask next is not a cosmetic scheduling detail. It is a control decision that determines what information becomes available to the model, thereby shaping the future posterior over the remaining masked tokens (see Fig.~\ref{fig:boe_overview}). Greedy schedules tend to reveal easy tokens that the model is already confident about, which yields small marginal information gain~\citep{pathp2,adlm, klass,jazbec2025learning}. Conversely, the tokens that disambiguate global structure, such as a latent variable in a multi-step derivation or a crucial entity binding in a narrative, may remain masked until late in decoding. This can force the sampler into globally inconsistent basins where subsequent iterations only refine a flawed scaffold. Existing solutions to this problem rely on expensive combinatorial search~\citep{lookum,pathp2,gibbs-particle}. For instance, Lookahead Unmasking (LookUM) \citep{lookum} simulate multiple future trajectories to verify local decisions. While effective, this increases inference compute by a factor of $K$, often negating the efficiency benefits of MDMs.

\textbf{Our approach.}
We formulate MDM sampling as a trajectory optimization problem under an entropy-regularized control objective~\citep{zhu2025mdns}. Concretely, we introduce Trajectory-Aware Backward-on-Entropy (BoE) Steering, a training-free sampler that uses a single backward (gradient) signal to estimate the expected reduction in future masked uncertainty induced by unmasking candidate tokens. BoE implements a one-step policy improvement viewpoint, rather than selecting tokens by their current entropy alone, it selects tokens that most reduce the next-step entropy of the remaining masked set under the model's denoising dynamics (see Fig.~\ref{fig:boe_overview}). This yields a principled lookahead mechanism that is compatible with existing MDMs. Our key insight is that the gradient of the entropy of the remaining masked tokens with respect to the current input embeddings provides a dense, first-order approximation of the value function for all candidate tokens simultaneously. 

\textbf{Systems challenge.}
A direct backward computation over all tokens is expensive and can negate inference benefits. To preserve the latency advantages of MDM decoding, we introduce \texttt{ActiveQueryAttention}, a sparsity-aware backward primitive tailored to the BoE objective. \texttt{ActiveQueryAttention} exploits the fact that the BoE gradient is only needed for a subset of active (masked) token positions, reducing the backward cost from quadratic attention-wide computation to a masked-active complexity that scales with the number of active positions. This makes gradient-informed steering practical for long sequences.

We summarize our contributions below:
\vspace{-6pt}
\begin{itemize}\setlength{\itemsep}{1pt}\setlength{\parskip}{0pt}\setlength{\parsep}{0pt}
    \item 
    We formalize token unmasking as a trajectory optimization. Based on this, we propose BoE steering, which selects unmasking actions by approximating the expected reduction in future masked entropy via a single backward signal.
    \item 
    We derive a practical per-token score (TIS) from a first-order expansion in embedding space that connects the backward signal to discrete token decisions.
    \item 
    We introduce a sparsity-aware \texttt{ActiveQueryAttention} mechanism that substantially reduces backward overhead, enabling compute-matched decoding comparisons.
    \item 
    We evaluate against ReMDM, entropy-bounded samplers, and strong schedule baselines on reasoning and long-context tasks, demonstrating improved accuracy--latency Pareto performance.
\end{itemize}

\begin{figure}[t]
    \centering
    \includegraphics[width=0.9\linewidth]{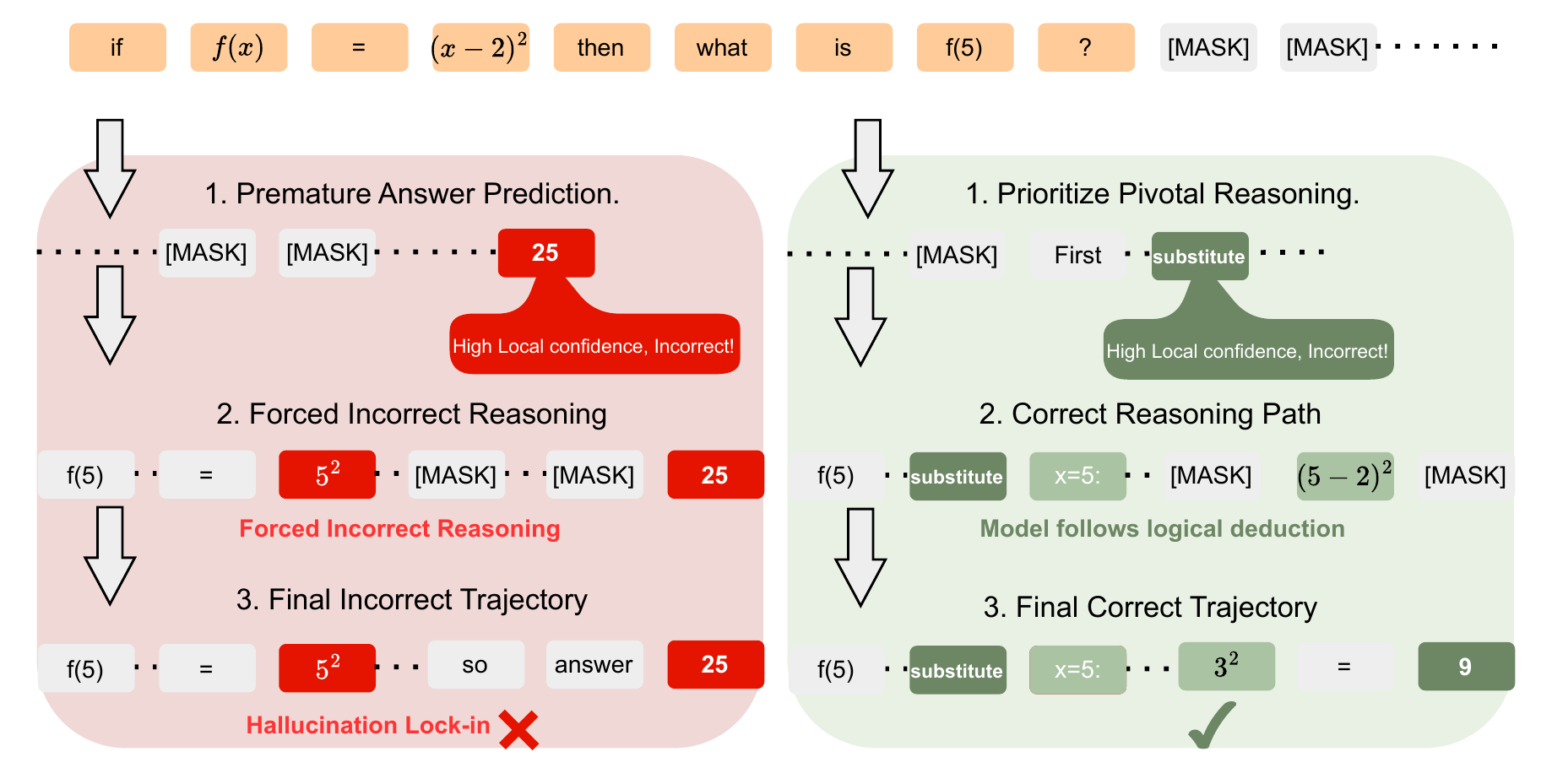}
    \caption{BoE (on right) prioritizes tokens that reduce future uncertainty unlike greedy schedule on left.
    On reasoning tasks, BoE tends to unmask globally load-bearing positions earlier, yielding faster reduction in total masked entropy and reduced incorrect trajectories. Best viewed zoomed-in.}
    \label{fig:boe_vs_greedy}
\end{figure}

%% file: sec/background.tex
\section{Background}
\label{sec:background}

\textbf{Notation.}
Let $\mathbf{x_0}=(x^1_0,\dots,x^L_0)\in\mathcal{V}^L$ be a length-$L$ token sequence over vocabulary $\mathcal{V}$.
Let $m\in\mathcal{V}$ denote a dedicated absorbing \texttt{[MASK]} token (we use $m$ and \texttt{[MASK]} interchangeably).
For any partially observed sequence $x_t\in\mathcal{V}^L$, define the masked index set
\begin{equation}
\mathcal{M}_t := \{ i\in\{1,\dots,L\} : x^i_t = m \}, \\
\qquad \overline{\mathcal{M}}_t := \{1,\dots,L\}\setminus \mathcal{M}_t.
\end{equation}
We write $x^{\overline{\mathcal{M}}_t}_t$ for observed (unmasked) tokens and $x^{\mathcal{M}_t}_t$ for masked tokens.

\textbf{Masked Diffusion Models.}
Masked Diffusion Models (MDMs) define a discrete noising process that progressively masks tokens, and a learned denoising process that reconstructs tokens from masked inputs \citep{mdlm, llada, lou2024discrete, austin2021structured}. Masked (absorbing-state) discrete diffusion defines a Markov chain $(x_t)_{t=0}^T$ on $\mathcal{V}^L$ that progressively replaces tokens with the absorbing symbol $m$. A widely used simplified masking process corrupts tokens independently with a monotonically decreasing schedule $\alpha_t\in[0,1]$ such that $\alpha_0=1$ and $\alpha_T=0$:
\begin{equation}
p_t(\mathbf{x_t} \mid \mathbf{x_0})
\;=\;
\prod_{i=1}^L p_t(x^i_t \mid x^i_0), \\
p_t(x^i_t \mid x^i_0)
=
\mathrm{Cat}\!\Big(x^i_t;\; \alpha_t\,\delta(x^i_0) + (1-\alpha_t)\,\delta(m)\Big),
\label{eq:forward_masking_common}
\end{equation}
where $\delta(\cdot)$ denotes a one-hot distribution over $\mathcal{V}$ and $\mathrm{Cat}(\cdot;\pi)$ is a categorical distribution with probabilities $\pi$.
A key structural property of \eqref{eq:forward_masking_common} is absorbing masking, i.e. once a token becomes $m$, it remains $m$ for all later forward times.

\begin{wrapfigure}[14]{r}{0.35\textwidth}
  \vspace{-15pt}
  \begin{center}
    \includegraphics[width=0.30\textwidth]{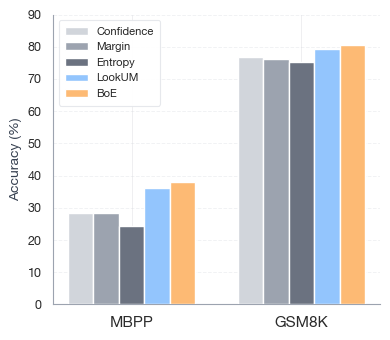}
  \end{center}
  \caption{Performance of greedy sampling with various unmasking method, and two trajectory-aware methods LookUM~\citep{lookum} and BoE(ours) from \texttt{LLaDA-8B}.}
  \vspace{1pt}
  \label{fig:boe-bars}
\end{wrapfigure}

\textbf{Decoding Process.}
The time-reversed posterior $q(x_{t-1}\mid x_t, x_0)$ factorizes across positions and takes a closed form.
For each position $i$, one obtains the two-case structure
\begin{equation}
q(x^i_{t-1} \mid x^i_t, x^i_0)
=
\begin{cases}
\mathrm{Cat}\!\big(x^i_{t-1}; \delta(x^i_t)\big), & x^i_t \neq m,\\[4pt]
\mathrm{Cat}\!\Big(x^i_{t-1}; \dfrac{1-\alpha_{t-1}}{1-\alpha_t}\,\delta(m) \\
+ \dfrac{\alpha_{t-1}-\alpha_t}{1-\alpha_t}\,\delta(x^i_0)\Big), & x^i_t = m,
\end{cases}
\label{eq:true_posterior_masked}
\end{equation}
valid for any discretization with $\alpha_{t-1}\ge \alpha_t$.
Equation \eqref{eq:true_posterior_masked} implies a locked-in behavior in the reverse direction, that is if a position is unmasked at time $t$ (that is, $x^i_t\neq m$), then $x^i_{t-1}=x^i_t$ deterministically. Equation \eqref{eq:true_posterior_masked} is parameterized to learn the reverse denoising process using a time-independent denoiser $D_\theta:\mathcal{V}^L\times\{1,\dots,T\}\to(\Delta^{|\mathcal{V}|-1})^L$ that predicts a distribution over clean tokens at $t=0$ given $x_t$. A standard induced reverse kernel replaces $\delta(x^i_0)$ in \eqref{eq:true_posterior_masked} with $D^i_\theta(x_t,t)$:
\begin{equation}
q_{t,\theta}(x^i_{t-1} \mid x^i_t, D^i_{\theta}(\mathbf{x_t}))
=
\begin{cases}
\mathrm{Cat}\!\big(x^i_{t-1}; \delta(x^i_t)\big), & x^i_t \neq m,\\[4pt]
\mathrm{Cat}\!\Big(
x^i_{t-1};
\dfrac{1-\alpha_{t-1}}{1-\alpha_t}\,\delta(m) \\
+
\dfrac{\alpha_{t-1}-\alpha_t}{1-\alpha_t}\,D^i_\theta(x_t,t)
\Big), & x^i_t = m.
\end{cases}
\label{eq:reverse_param_common}
\end{equation}
This parameterization makes explicit that reverse updates occur only at masked positions, and that already-unmasked positions remain fixed unless one modifies the inference procedure to allow revision~\citep{remdm}.

\textbf{Training objective.}
In the continuous-time limit and in widely used discrete approximations, the ELBO reduces to a weighted cross-entropy objective over masked positions.
A representative discrete-time surrogate is
\begin{equation}
\mathcal{L}(\theta)
=
\mathbb{E}_{x_0\sim p_{\text{data}}}
\;\mathbb{E}_{t\sim \mathrm{Unif}\{1,\dots,T\}}
\;\mathbb{E}_{x_t\sim q(\cdot\mid x_0)}
\Bigg[ \\
w_t \sum_{i: x^i_t=m} \mathrm{CE}\big(\delta(x^i_0),\; D^i_\theta(x_t,t)\big)
\Bigg],
\label{eq:training_objective_common}
\end{equation}
where $w_t$ is a known nonnegative weight determined by the schedule often proportional to $(\alpha_{t-1}-\alpha_t)/(1-\alpha_t)$. We omit derivations and refer to prior work for full ELBO details~\citep{elbo-discrete}.

\textbf{Unmasking Schedule.}
During inference, at each step, the sampler chooses (i) where to write by selecting an update set $\mathcal{U}_t\subseteq \mathcal{M}_t$ with budget $|\mathcal{U}_t|=b_t$, and (ii) what to write by sampling (or taking $\arg\max$) from the denoiser’s token distributions $p^i_\theta(\cdot\mid x_t,t):=D^i_\theta(x_t,t)$ for $i\in\mathcal{U}_t$, yielding the state update
\begin{equation}
x^i_{t-1}=
\begin{cases}
\hat{x}^i_{t-1}, & i\in \mathcal{U}_t,\\
x^i_t, & i\notin \mathcal{U}_t,
\end{cases}
\qquad
\mathcal{M}_{t-1}=\mathcal{M}_t\setminus \mathcal{U}_t,
\label{eq:state_update_common}
\end{equation}
with $\hat{x}^i_{t-1}\sim D^i_\theta(x_t,t)$.
Crucially, the scheduling decision $\mathcal{U}_t$ changes the conditioning context for all subsequent predictions, hence decoding is not merely token estimation under a fixed schedule but a coupled control problem in which the schedule governs information flow. Most practical policies instantiate $\mathcal{U}_t$ by ranking masked positions with a local uncertainty score $\sigma^i_t=\sigma(p^i_\theta(\cdot\mid x_t,t))$ and taking the top-$b_t$ indices, where $\sigma$ is typically confidence $p^{(i,1)}_t$~\citep{maskgit}, margin $p^{(i,1)}_t-p^{(i,2)}_t$~\citep{kim2025train}, or negative entropy $-H(p^i_\theta)$~\citep{koh2024plm}.
While effective, these greedy local-uncertainty schedules can prioritize tokens that are already easy to predict, whose revelation has low marginal influence on unresolved positions; in reasoning settings, this delays globally load-bearing pivots (e.g., an intermediate quantity, see Fig.~\ref{fig:boe_vs_greedy}), and the locked-in property of masked diffusion makes such delayed disambiguation difficult to correct without explicit remasking.
BoE addresses this limitation by prioritizing candidate updates according to their predicted reduction in future masked uncertainty. 

\begin{table*}[t]
\centering
\small
\setlength{\tabcolsep}{1pt}
\rowcolors{2}{HeaderGray}{white}
\begin{tabular}{lcccccc}
\toprule
\textbf{Method} &
\textbf{Training-free} &
\textbf{Lookahead} &
\textbf{Re-mask} &
\textbf{Backward signal} &
\textbf{Compute control} &
\textbf{Model-agnostic} \\
\midrule
Fixed schedule                         & \checkmark &     &       &       & \checkmark & \checkmark \\
Confidence/entropy-greedy              & \checkmark &     &       &       & \checkmark & \checkmark \\
Entropy-bounded (EB-style)             & \checkmark &     &       &       & \checkmark & \checkmark \\
Re-masking sampler (ReMDM-style)       & \checkmark &     & \checkmark  &       & \checkmark & \checkmark \\
Heuristic lookahead (LookUM-style)     & \checkmark & \checkmark&       &       & \textit{varies} & \checkmark \\
\midrule
\rowcolor{OursGreen}
\textbf{BoE Steering (ours)}           & \checkmark & \checkmark& \textit{optional} & \checkmark & \checkmark & \checkmark \\
\bottomrule
\end{tabular}
\caption{
\textbf{Sampling policy taxonomy (capabilities).}
BoE remains training-free and model-agnostic, adds principled short-horizon lookahead via a single backward signal, and preserves explicit compute control through sparse backward evaluation.
}
\label{tab:policy_compare}
\end{table*}

%% file: sec/method.tex
\section{Method: Backward-on-Entropy (BoE) Steering}
\label{sec:method}

BoE is an inference-time scheduling policy for masked diffusion decoding.
BoE works by unmasking the tokens that reduce the future masked token uncertainty. BoE is training-free and model-agnostic.

\subsection{One-Step Lookahead as Entropy-Regularized Control}
\label{sec:boe_control}
\textbf{Masked uncertainty.}
We measure uncertainty at time $t$ by the total predictive entropy over currently masked positions~\citep{ebsampler}:
\begin{equation}
\mathcal{H}_t(x_t)
\;:=\;
\sum_{i\in\mathcal{M}_t} H\!\left(p^i_\theta(\cdot\mid x_t,t)\right),
\label{eq:masked_entropy_t}
\end{equation}
\textbf{Lookahead objective.}
Greedy local-uncertainty schedules select $\mathcal{U}_t$ using only current-step scores~\citep{ebsampler,koh2024plm,kim2025train,elbo-discrete,klass}.
In contrast, BoE explicitly scores an action by its expected next-steps masked uncertainty after writing the selected tokens:
\begin{equation}
\mathcal{U}_t^\star
\;\in\;
\arg\min_{\mathcal{U}\subseteq\mathcal{M}_t,\;|\mathcal{U}|=b_t}
\;
\mathbb{E}_{\hat{x}_{t-1}^{\mathcal{U}}\sim \prod_{i\in\mathcal{U}}p^i_\theta(\cdot\mid x_t,t)} \\
\Big[
\mathcal{H}_{t-1}\!\big(x_{t-1}(\hat{x}_{t-1}^{\mathcal{U}})\big)
\Big]
\;+\;
\lambda_t\,\mathcal{R}_t(\mathcal{U}).
\label{eq:boe_objective_tminus}
\end{equation}
Here $x_{t-1}(\hat{x}_{t-1}^{\mathcal{U}})$ denotes the state obtained by writing $\hat{x}_{t-1}^i$ at $i\in\mathcal{U}$ and keeping other positions unchanged, and $\mathcal{R}_t$ is an anti-collapse regularizer (Section~\ref{sec:anticollapse}); $\lambda_t\ge 0$ is annealed.
Equation \eqref{eq:boe_objective_tminus} is combinatorial and involves an expectation over discrete writes.
BoE makes it tractable by (i) introducing a continuous relaxation for candidate writes, and (ii) using a single backward signal to approximate each position's marginal contribution to lookahead objective.

\subsection{Continuous Relaxation and a Differentiable One-Step Surrogate}
\label{sec:relaxation}


Let $E\in\mathbb{R}^{|\mathcal{V}|\times d}$ be the token embedding matrix and $e_m\in\mathbb{R}^d$ the embedding of the mask token $m$.
For a masked position $i\in\mathcal{M}_t$, define the denoiser distribution $\pi^i_t := p^i_\theta(\cdot\mid x_t,t)\in\Delta^{|\mathcal{V}|-1}$ and its soft write embedding
\begin{equation}
\tilde{e}^i_t \;:=\; E^\top \pi^i_t \in \mathbb{R}^d,
\qquad
\Delta e^i_t := \tilde{e}^i_t - e_m .
\label{eq:soft_embedding_t}
\end{equation}
For a candidate set $\mathcal{U}\subseteq\mathcal{M}_t$, define a relaxed next state $\tilde{x}_{t-1}(\mathcal{U})$ that is identical to $x_t$ except that positions in $\mathcal{U}$ are fed to the denoiser using their soft embeddings $\tilde{e}^i_t$ (after the embedding layer), while positions in $\mathcal{M}_t\setminus\mathcal{U}$ remain as $m$.
Using this relaxed state, we define the one-step surrogate masked entropy:
\begin{equation}
\widetilde{\mathcal{H}}_{t-1}(\mathcal{U})
\;:=\;
\sum_{j\in \mathcal{M}_t\setminus \mathcal{U}}
H\!\left(p^j_\theta(\cdot\mid \tilde{x}_{t-1}(\mathcal{U}),t-1)\right),
\label{eq:surrogate_next_entropy_tminus}
\end{equation}
which is differentiable with respect to the injected embeddings $\{\tilde{e}^i_t\}_{i\in\mathcal{U}}$.
Intuitively, $\widetilde{\mathcal{H}}_{t-1}(\mathcal{U})$ estimates how uncertain the remaining masked positions will be after revealing $\mathcal{U}$.

\subsection{Token Importance Score (TIS) from a Single Backward Signal}
\label{sec:tis}

BoE ranks candidates by their predicted reduction in future masked entropy.
For a candidate position $i\in\mathcal{M}_t$, consider continuously ``turning on'' its reveal from the mask embedding toward its soft write:
\begin{equation}
e^i(\alpha) \;=\; e_m + \alpha\,\Delta e^i_t,\qquad \alpha\in[0,1],
\label{eq:interp_tminus}
\end{equation}
with all other masked positions kept at $m$.
Let $\widetilde{\mathcal{H}}_{t-1}(\alpha;i)$ denote \eqref{eq:surrogate_next_entropy_tminus} under this single-position interpolation.
A first-order expansion around $\alpha=0$ gives
\begin{equation}
\widetilde{\mathcal{H}}_{t-1}(1;i)
\;\approx\;
\widetilde{\mathcal{H}}_{t-1}(0;i)
+
\left\langle
\nabla_{e_i}\widetilde{\mathcal{H}}_{t-1}(0;i),
\;\Delta e^i_t
\right\rangle.
\label{eq:taylor_tminus}
\end{equation}
Therefore the predicted entropy decrease from revealing $i$ is approximately the negative directional derivative in \eqref{eq:taylor_tminus}.
We define the Token Importance Score (TIS):
\begin{equation}
\mathrm{TIS}^i_t
\;:=\;
-\left\langle g^i_t,\;\Delta e^i_t\right\rangle,
\qquad
g^i_t := \nabla_{e_i}\widetilde{\mathcal{H}}_{t-1}(0;i).
\label{eq:tis_def_tminus}
\end{equation}
Here $g^i_t$ is a single-step backward signal measuring how sensitive next-step masked entropy is to revealing information at $i$.

\textbf{Confidence gating.}
To stabilize gradients on very high-entropy positions, we gate TIS with a monotone confidence factor
\begin{equation}
c^i_t \;=\; \mathrm{clip}\!\left(1 - \frac{H(p^i_\theta(\cdot\mid x_t,t))}{\log|\mathcal{V}|},\,0,\,1\right),
\\ 
\widehat{\mathrm{TIS}}^i_t \;=\; c^i_t\,\mathrm{TIS}^i_t .
\label{eq:confidence_gating_tminus}
\end{equation}
BoE selects $\mathcal{U}_t$ as the top-$b_t$ positions in a candidate set $\mathcal{C}_t\subseteq\mathcal{M}_t$ by $\widehat{\mathrm{TIS}}^i_t$, optionally incorporating the anti-collapse regularizer, Section ~\ref{sec:anticollapse}.


\subsection{Theoretical Guarantee for the First-Order Surrogate}
\label{sec:theory_main}

We formalize when TIS approximates true one-step entropy reduction in the relaxed objective.

\begin{theorem}[First-order entropy-reduction surrogate]
\label{thm:first_order_tminus}
Fix $t$ and $i\in\mathcal{M}_t$.
Assume $\widetilde{\mathcal{H}}_{t-1}(\alpha;i)$ is twice continuously differentiable on $[0,1]$ and
$\sup_{\alpha\in[0,1]}\big|\frac{d^2}{d\alpha^2}\widetilde{\mathcal{H}}_{t-1}(\alpha;i)\big|\le M$.
Define $\Delta\widetilde{\mathcal{H}}_{t-1}(i):=\widetilde{\mathcal{H}}_{t-1}(0;i)-\widetilde{\mathcal{H}}_{t-1}(1;i)$.
Then
\begin{equation}
\Delta\widetilde{\mathcal{H}}_{t-1}(i)
\;=\;
\mathrm{TIS}^i_t + \varepsilon^i_t,
\qquad
|\varepsilon^i_t|\le \frac{M}{2}.
\label{eq:error_bound_tminus}
\end{equation}
\end{theorem}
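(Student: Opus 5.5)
The plan is to reduce the statement to a one-dimensional Taylor expansion with Lagrange remainder for the scalar function $f(\alpha):=\widetilde{\mathcal{H}}_{t-1}(\alpha;i)$ on $[0,1]$. Since $f$ is $C^2$ on $[0,1]$ by assumption, Taylor's theorem gives some $\xi\in(0,1)$ with
\begin{equation*}
f(1) \;=\; f(0) + f'(0) + \tfrac{1}{2} f''(\xi).
\end{equation*}
Rearranging yields $\Delta\widetilde{\mathcal{H}}_{t-1}(i)=f(0)-f(1)=-f'(0)-\tfrac12 f''(\xi)$. So the argument has two pieces. First, identify $-f'(0)$ with $\mathrm{TIS}^i_t$. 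Second, bound the remainder $\varepsilon^i_t:=-\tfrac12 f''(\xi)$ by $M/2$ in absolute value, using the uniform bound $\sup_{\alpha\in[0,1]}|f''(\alpha)|\le M$.

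The identification step is the chain rule applied along the affine path $e^i(\alpha)=e_m+\alpha\,\Delta e^i_t$ from \eqref{eq:interp_tminus}. Only the embedding at position $i$ depends on $\alpha$; all other inputs, including the other masked positions at $e_m$, are held fixed. Hence $f(\alpha)=\widetilde{\mathcal{H}}_{t-1}(e^i(\alpha))$ viewed as a function of the injected embedding at $i$, and
\begin{equation*}
f'(0)=\left\langle \nabla_{e_i}\widetilde{\mathcal{H}}_{t-1}(0;i),\,\Delta e^i_t\right\rangle=\langle g^i_t,\Delta e^i_t\rangle .
\end{equation*}
By the definition \eqref{eq:tis_def_tminus}, this gives $-f'(0)=\mathrm{TIS}^i_t$. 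The remainder bound is then immediate.

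The main obstacle I expect is not analytic but definitional. The chain-rule step needs the gradient $g^i_t$ to be the gradient of the same map whose restriction to the segment is $f$, evaluated at $e^i(0)=e_m$. It also needs that map to be differentiable in $e_i$ in a neighborhood of the segment, not merely along it. I would handle this by noting that $\widetilde{\mathcal{H}}_{t-1}(\cdot;i)$ is a composition of the network (smooth activations, softmax) with entropy. The entropy is smooth on the interior of the simplex, and softmax outputs lie there, so the map is differentiable and the directional-derivative formula holds. If one prefers not to invoke this, the theorem can be read as taking $f'(0)$ as the definition of the directional derivative, which coincides with $\langle g^i_t,\Delta e^i_t\rangle$ whenever the gradient exists.

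A second subtlety concerns the summation set in \eqref{eq:surrogate_next_entropy_tminus}: it should be the fixed set $\mathcal{M}_t\setminus\{i\}$ for all $\alpha$, so that $f$ is a single smooth function on the whole segment. I would state this convention explicitly at the start of the proof.
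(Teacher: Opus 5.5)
Your proposal is correct and follows the paper's proof essentially step for step. Both arguments take the Taylor expansion of $f(\alpha)=\widetilde{\mathcal{H}}_{t-1}(\alpha;i)$ with Lagrange remainder at some $\xi\in(0,1)$, use the chain rule along $e^i(\alpha)=e_m+\alpha\,\Delta e^i_t$ to identify $-f'(0)$ with $\mathrm{TIS}^i_t$, and then apply the curvature bound to get $|\varepsilon^i_t|\le M/2$; your remarks on differentiability in $e_i$ and on fixing the summation set to $\mathcal{M}_t\setminus\{i\}$ spell out conventions the paper uses implicitly.
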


\begin{proof}
Provided in Appendix~\ref{sec:appendix_theory}.
\end{proof}

\begin{corollary}[Ordering stability under margin]
\label{cor:ordering}
Let $i,j\in\mathcal{M}_t$ satisfy the assumptions of Theorem~\ref{thm:first_order_tminus} with the same curvature bound $M$.
If $\mathrm{TIS}^i_t - \mathrm{TIS}^j_t > M$, then $\Delta\widetilde{\mathcal{H}}_{t-1}(i) > \Delta\widetilde{\mathcal{H}}_{t-1}(j)$; i.e., TIS ranks $i$ above $j$ in true relaxed one-step entropy reduction.
\end{corollary}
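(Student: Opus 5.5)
The plan is to derive the corollary directly from Theorem~\ref{thm:first_order_tminus}, applied separately to $i$ and to $j$, and then use the triangle inequality on the two error terms.

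\textbf{Step 1.} Invoke Theorem~\ref{thm:first_order_tminus} at position $i$ with curvature bound $M$. This gives $\Delta\widetilde{\mathcal{H}}_{t-1}(i)=\mathrm{TIS}^i_t+\varepsilon^i_t$ with $|\varepsilon^i_t|\le M/2$. Applying the theorem at $j$, under the same bound $M$, gives $\Delta\widetilde{\mathcal{H}}_{t-1}(j)=\mathrm{TIS}^j_t+\varepsilon^j_t$ with $|\varepsilon^j_t|\le M/2$. Both invocations are legitimate because the hypotheses of the corollary state exactly the theorem's assumptions (twice continuous differentiability of the single-position interpolation and a uniform second-derivative bound $M$) for each of $i$ and $j$.

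\textbf{Step 2.} Subtract the two identities:
\[
\Delta\widetilde{\mathcal{H}}_{t-1}(i)-\Delta\widetilde{\mathcal{H}}_{t-1}(j)=\bigl(\mathrm{TIS}^i_t-\mathrm{TIS}^j_t\bigr)+\bigl(\varepsilon^i_t-\varepsilon^j_t\bigr).
\]
By the triangle inequality, $\varepsilon^i_t-\varepsilon^j_t\ge -|\varepsilon^i_t|-|\varepsilon^j_t|\ge -M$. Therefore
\[
\Delta\widetilde{\mathcal{H}}_{t-1}(i)-\Delta\widetilde{\mathcal{H}}_{t-1}(j)\ge \bigl(\mathrm{TIS}^i_t-\mathrm{TIS}^j_t\bigr)-M,
\]
and the right-hand side is strictly positive by the margin hypothesis $\mathrm{TIS}^i_t-\mathrm{TIS}^j_t>M$. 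This yields the strict inequality claimed.

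\textbf{Main obstacle.} There is no real analytic difficulty once the theorem is granted. The one point needing care is interpretive: each $\Delta\widetilde{\mathcal{H}}_{t-1}(\cdot)$ is defined through its own single-position interpolation path, so each sums over a different remaining masked set. The comparison is therefore between two scalars, each computed under its own relaxation, which is exactly what the theorem controls.

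I would also note that the margin $M$ is the natural threshold: it is the worst case in which the two $M/2$ errors have opposite signs. It cannot be lowered without extra information, such as sign information on the second derivatives.
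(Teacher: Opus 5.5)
Your proposal is correct and follows essentially the same route as the paper's proof: apply Theorem~\ref{thm:first_order_tminus} to $i$ and $j$ separately, subtract the two identities, bound $\varepsilon^i_t-\varepsilon^j_t\ge -M$, and conclude from the margin hypothesis. Your side remarks, that each reduction is computed under its own single-position relaxation and that the threshold $M$ is the worst case with opposite-sign errors, are accurate and are not needed for the argument.
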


\subsection{Efficient Backward Scoring via \texttt{ActiveQueryAttention}}
\label{sec:aqa}

Computing $\{g^i_t\}_{i\in\mathcal{M}_t}$ naively requires backpropagating through attention for all $L$ queries, which is costly for long sequences.
BoE only needs gradients for a small active set.

\textbf{Active set and candidate prefilter.}
Let $\mathcal{C}_t\subseteq\mathcal{M}_t$ be a candidate set produced by a cheap prefilter (e.g., top-$r$ by confidence/margin/entropy).
Define the active query set $\mathcal{A}_t := \mathcal{C}_t$.
We compute forward attention normally to preserve logits, but restrict backward to queries in $\mathcal{A}_t$ by stopping gradients for inactive query outputs. Appendix~\ref{sec:appendix_code} provides detailed implementation guide. 

\textbf{Attention backward complexity.}
For a standard attention layer with hidden size $d$, the dominant score computation is $QK^\top$.
Standard backward scales as $O(L^2 d)$.
With active queries, backward scales as
\begin{equation}
O(|\mathcal{A}_t|\,L\,d),
\label{eq:aqa_complexity}
\end{equation}
while forward predictions remain unchanged.
This makes gradient-guided lookahead practical under compute-matched decoding.


\subsection{Anti-Collapse Regularizer}
\label{sec:anticollapse}

A common failure mode of aggressive steering is premature overcommitment, revealing very low-entropy tokens early can lock the incorrect trajectory early on. We discourage overly confident reveals early by penalizing tokens whose entropy is below a time-dependent floor. Choose a decreasing entropy floor schedule $h_t$ (high at early denoising $t\approx T$, low near completion $t\approx 1$), i.e.:
\begin{equation}
h_t = h_{\max}\cdot \frac{t}{T}.
\label{eq:entropy_floor_schedule}
\end{equation}
Define
\begin{equation}
\mathcal{R}_t(\mathcal{U})
\;:=\;
\sum_{i\in\mathcal{U}}
\big[h_t - H^i_t\big]_+^2,
\qquad
[a]_+ := \max(a,0).
\label{eq:anticollapse_tminus}
\end{equation}
In practice, we incorporate $\mathcal{R}_t$ by subtracting $\lambda_t [h_t - H^i_t]_+^2$ from each candidate score (equivalently, adding $\lambda_t\mathcal{R}_t$ to the objective in \eqref{eq:boe_objective_tminus}).

\subsection{BoE Steering Algorithm}
\label{sec:algorithm}

Algorithm~\ref{alg:boe} summarizes BoE.
Compared to greedy local-uncertainty schedules and entropy-bounded rules \citep{ebsampler}, BoE introduces a principled one-step lookahead signal.
Compared to remasking samplers \citep{remdm}, BoE can be used with or without remasking; our default is training-free scheduling without altering the underlying denoiser.
Compared to heuristic lookahead \citep{lookum}, BoE uses a single backward pass rather than multi-path forward search.

\begin{algorithm}[t]
\caption{Backward-on-Entropy (BoE) Steering for Masked Diffusion Decoding}
\label{alg:boe}
\begin{algorithmic}[1]
\Require Denoiser $D_\theta(\cdot,t)$, steps $T$, budgets $\{b_t\}_{t=1}^T$, candidate size $r$ (optional), anti-collapse weights $\{\lambda_t\}$
\State Initialize $x_T \leftarrow (m,\dots,m)$, $\mathcal{M}_T\leftarrow \{1,\dots,L\}$
\For{$t=T$ \textbf{down to} $1$}
    \State \textbf{Current-step prediction:} compute $\pi^i_t \leftarrow D^i_\theta(x_t,t)$ and entropies $H^i_t$ for $i\in\mathcal{M}_t$
    \State \textbf{Candidate prefilter:} choose $\mathcal{C}_t\subseteq\mathcal{M}_t$; else $\mathcal{C}_t=\mathcal{M}_t$
    \State \textbf{Build relaxed next state:} form soft embeddings $\tilde{e}^i_t=E^\top \pi^i_t$ for $i\in\mathcal{C}_t$ and define $\tilde{x}_{t-1}(\alpha;i)$ via \eqref{eq:interp_tminus}
    \State \textbf{One-step lookahead entropy:} compute $\widetilde{\mathcal{H}}_{t-1}$ in \eqref{eq:surrogate_next_entropy_tminus} with a forward pass at time $(t-1)$
    \State \textbf{Sparse backward (AQA):} set active queries $\mathcal{A}_t\leftarrow \mathcal{C}_t$ and compute gradients $g^i_t=\nabla_{e_i}\widetilde{\mathcal{H}}_{t-1}(0;i)$ for $i\in\mathcal{C}_t$
    \State \textbf{Scoring:} compute $\widehat{\mathrm{TIS}}^i_t$ via \eqref{eq:tis_def_tminus}--\eqref{eq:confidence_gating_tminus}, and subtract $\lambda_t [h_t-H^i_t]_+^2$
    \State \textbf{Select updates:} $\mathcal{U}_t \leftarrow \mathrm{TopK}\big(\widehat{\mathrm{TIS}}^i_t: i\in\mathcal{C}_t\big)$ with $|\mathcal{U}_t|=b_t$
    \State \textbf{Write tokens:} for $i\in\mathcal{U}_t$, set $\hat{x}^i_{t-1}\sim \pi^i_t$ (or $\arg\max$); update $x_{t-1}$ by \eqref{eq:state_update_common}
\EndFor
\State \Return $x_0$
\end{algorithmic}
\end{algorithm}

\textbf{Discussion.}
BoE replaces purely current-step local scoring with a lookahead proxy that targets future entropy reduction.
This promotes revealing globally load-bearing pivots earlier, which is particularly important under the locked-in property of masked diffusion (Section~\ref{sec:related}).
\texttt{ActiveQueryAttention} makes the backward-guided policy viable under compute-matched decoding by ensuring the additional cost scales with the active candidate set rather than the full sequence length.

%% file: sec/experiments.tex
\begin{figure*}[t]
    \centering
    \includegraphics[width=\linewidth]{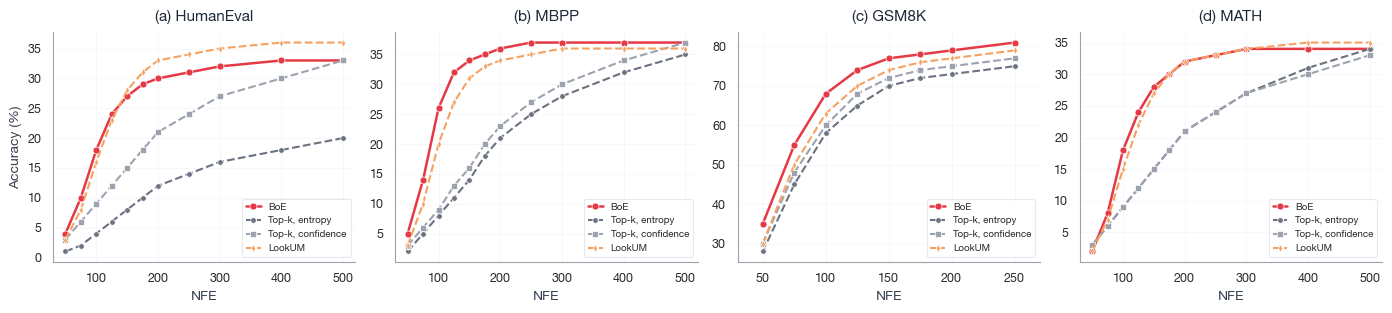}
    \caption{Pass@1 accuracy vs.\ full \texttt{max\_gen\_len} NFE on LLaDA-8B.
        We report compute-matched scaling curves on HumanEval, MBPP, GSM8K, and MATH500 for four training-free samplers: Top-$k$ (entropy), Top-$k$ (confidence), LookUM, and BoE (ours).
        Following ~\citep{ebsampler} evaluation protocol, we fix \texttt{max\_gen\_len} and measure NFE as the number of denoiser evaluations required to unmask all tokens in the generation region.
        }
    \label{fig:placeholder}
\end{figure*}

\begin{table*}[!htp]
\centering
\small
\setlength{\tabcolsep}{8.3pt}
\caption{
LLaDA-8B benchmark performance (pass@1). All methods are training-free.
Best result per column is shown in \best{bold}, second-best in \second{underlined}.
}
\begin{tabular}{l
c >{\columncolor{ColGray}}c
c >{\columncolor{ColGray}}c
c >{\columncolor{ColGray}}c
c >{\columncolor{ColGray}}c
c >{\columncolor{ColGray}}c
c >{\columncolor{ColGray}}c
}
\toprule
\multirow{2}{*}{\textbf{Method}} &
\multicolumn{2}{c}{\textbf{MBPP}} &
\multicolumn{2}{c}{\textbf{HumanEval}} &
\multicolumn{2}{c}{\textbf{GSM8K}} &
\multicolumn{2}{c}{\textbf{MATH500}} &
\multicolumn{2}{c}{\textbf{Countdown}} &
\multicolumn{2}{c}{\textbf{Sudoku}} \\
& 128 & 256 & 128 & 256 & 128 & 256 & 128 & 256 & 128 & 256 & 128 & 256 \\
\midrule
Confidence
& 28.6 & 28.4
& 19.5 & 32.0
& 68.3 & 76.7
& 26.0 & 32.4
& 20.3 & \best{21.9}
& 1.4 & 27.4 \\

Margin
& 28.6 & 28.4
& 25.6 & 31.0
& 67.1 & 76.1
& 28.4 & 34.4
& 19.1 & 20.7
& 21.8 & 27.8 \\

Entropy
& 27.2 & 24.4
& 19.5 & 15.9
& 66.7 & 75.4
& 26.0 & 33.0
& 21.9 & 20.3
& 0.0 & 12.0 \\

PC-Sampler
& 24.0 & 25.2
& 13.4 & 30.5
& 67.3 & 73.7
& 25.2 & 32.4
& \best{26.5} & 20.3
& 23.2 & 24.0 \\

ReMDM
& 28.6 & 28.4
& 17.1 & 29.3
& 69.1 & 77.9
& 27.4 & 33.0
& 25.3 & 17.2
& 0.4 & 22.8 \\

LookUM
& \second{30.5} & \second{36.2}
& \best{27.4} & \best{35.9}
& \second{72.7} & \second{79.3}
& \second{28.8} & \best{34.6}
& 25.4 & \best{23.1}
& \second{25.0} & \second{28.0} \\

\midrule
\rowcolor{OursGreen}
\textbf{BoE (ours)}
& \best{31.6} & \best{37.4}
& \second{26.9} & \second{33.1}
& \best{73.9} & \best{80.6}
& \best{29.6} & \second{34.2}
& \second{24.8} & {21.4}
& \best{25.7} & \best{29.2} \\
\bottomrule
\end{tabular}
\label{tab:main_results_llada}
\end{table*}

\begin{table*}[t]
\centering
\small
\setlength{\tabcolsep}{8.3pt}
\caption{LLaDA-1.5 benchmark performance (pass@1).
All methods are training-free and evaluated under compute-matched decoding. Best and second-best results per column are highlighted.
}
\begin{tabular}{l
c >{\columncolor{ColGray}}c
c >{\columncolor{ColGray}}c
c >{\columncolor{ColGray}}c
c >{\columncolor{ColGray}}c
c >{\columncolor{ColGray}}c
c >{\columncolor{ColGray}}c
}
\toprule
\multirow{2}{*}{\textbf{Method}} &
\multicolumn{2}{c}{\textbf{MBPP}} &
\multicolumn{2}{c}{\textbf{HumanEval}} &
\multicolumn{2}{c}{\textbf{GSM8K}} &
\multicolumn{2}{c}{\textbf{MATH500}} &
\multicolumn{2}{c}{\textbf{Countdown}} &
\multicolumn{2}{c}{\textbf{Sudoku}} \\
& 128 & 256 & 128 & 256 & 128 & 256 & 128 & 256 & 128 & 256 & 128 & 256 \\
\midrule
Confidence
& 40.3 & 38.6
& 26.8 & 23.7
& 69.5 & 79.4
& 28.6 & 32.6
& 20.3 & 23.4
& 1.4 & 27.4 \\

Margin
& 39.5 & 38.1
& \best{31.1} & 27.4
& 71.3 & 78.3
& 27.2 & 35.0
& 24.6 & 14.0
& 21.8 & 27.8 \\

Entropy
& 40.7 & 36.5
& 20.7 & 23.2
& 69.7 & 77.0
& 28.2 & 32.2
& 23.0 & 12.9
& 0.0 & 12.0 \\

PC-Sampler
& 42.8 & 39.6
& 23.8 & 23.8
& 70.1 & 77.3
& 26.6 & 32.2
& 25.4 & 19.1
& 25.6 & 27.2 \\

ReMDM
& 41.9 & 39.3
& 28.1 & \second{29.8}
& 70.4 & 80.1
& 27.4 & 34.0
& 23.4 & 19.9
& 0.4 & 22.8 \\

LookUM
& \second{45.0} & \second{43.6}
& \second{30.7} & \best{33.5}
& \second{74.5} & \second{82.3}
& \second{29.2} & \second{35.8}
& \best{27.3} & 17.9
& \second{26.8} & \second{28.0} \\

\midrule
\rowcolor{OursGreen}
\textbf{BoE (ours)}
& \best{45.4} & \best{44.1}
& {28.3} & {27.6}
& \best{74.9} & \best{83.1}
& \best{30.4} & \best{36.9}
& \second{26.4} & \second{21.1}
& \best{27.9} & \best{29.4} \\
\bottomrule
\end{tabular}
\label{tab:main_results_llada15}
\end{table*}

\section{Experiments}
\label{sec:experiments}

We evaluate BoE steering on (i) standard reasoning and code generation benchmarks, (ii) distribution-level text generation quality, and (iii) structured logic puzzles. Following recent inference-time samplers for masked diffusion models~\citep{ebsampler,remdm,lookum,klass}, our evaluation emphasizes compute-matched comparisons, accuracy--compute Pareto curves, and explicit overhead accounting for any additional inference-time computation.

\subsection{Experimental Setup}
\label{sec:exp_setup}

\textbf{Backbone models.}
Unless stated otherwise, we report main results on \texttt{LLaDA-8B-Base} \citep{llada}, and the RL-tuned \texttt{LLaDA-1.5} model when available \citep{llada1.5}. We use sequence lengths $L\in\{128,256\}$ and the standard decoding setup used by prior sampler, two tokens per denoising step under a fixed horizon $T$ unless otherwise noted \citep{lookum,pathp2}. All experiments were run on NVIDIA H200 GPUs. 

\textbf{Datasets and tasks.}
We evaluate on diverse benchmarks spanning mathematics, code synthesis, and planning/constraint solving:
(i) \textbf{GSM8K}~\citep{gsm8k} and \textbf{MATH500}~\citep{math500} for math reasoning,
(ii) \textbf{HumanEval}~\citep{humaneval} and \textbf{MBPP}~\citep{mbpp} for code generation,
and (iii) \textbf{Sudoku} and \textbf{Countdown} for structured reasoning/planning, following the protocols in recent MDM sampling work \citep{lookum,pathp2}. For distribution-level quality, we evaluate unconditional/continuation generation on \textbf{OpenWebText} (OWT) using \textbf{MAUVE}~\citep{mauve}, consistent with ReMDM \citep{remdm}.



\textbf{Generation length control.}
To avoid inflated compute from producing unused tokens, we adopt a task-specific \texttt{generate\_until} logic analogous to autoregressive evaluation, extended to MDMs by requiring that all tokens up to the stop pattern are unmasked \citep{ebsampler}. 

\textbf{Baselines.}
We compare BoE to strong inference-time samplers representing the main families of MDM decoding. Local-uncertainty greedy schedules: Confidence~\citep{maskgit}, Margin~\citep{kim2025train}, and Negative Entropy \citep{koh2024plm} unmasking. PC-Sampler: calibrated confidence with position- and frequency-aware adjustments~\citep{pcsampler}. Entropy-bounded scheduling: adaptive unmasking based on uncertainty constraints \citep{ebsampler}. ReMDM~\citep{remdm}, PC Sampler~\citep{pcsampler} and LookUM~\citep{lookum}. We report \textbf{BoE} (full) and \textbf{BoE + \texttt{ActiveQueryAttention}}. The latter keeps the forward pass unchanged but sparsifies the backward computation to an active candidate set, enabling compute-matched deployment at practical sequence lengths (Section~\ref{sec:aqa}).

Unless specified otherwise, $\mathcal{C}_t$ is the top-$r$ masked positions by confidence, with $r$ set as a fraction of the remaining masked tokens, $r = \rho |\mathcal{M}_t|$, $\rho \in \{0.10, 0.25, 0.50\}$. For \texttt{ActiveQueryAttention}, active set $\mathcal{A}_t := \mathcal{C}_t$. Value of entropy floor $h_t = h_{\max}(1 - t/T)$ with $h_{\max}$ tuned on a small validation subset.

\subsection{Main Results}
\label{sec:exp_main}

\textbf{Benchmark performance.}
Table~\ref{tab:main_results_llada} shows that BoE consistently improves pass@1 over strong training-free baselines on \texttt{LLaDA-8B-Base} under compute-matched decoding at $L\in\{128,256\}$.
Across the four core code+math benchmarks that we sweep for scaling curves (HumanEval, MBPP, GSM8K, MATH500), BoE attains the best result at both context lengths on MBPP and GSM8K, and is either best or second-best on HumanEval and MATH500.
Notably, BoE improves over LookUM at both lengths on MBPP (+1.1/+1.2 points at $L=128/256$) and GSM8K (+1.2/+1.3), and remains competitive on HumanEval (within 0.5/2.8 points) and MATH500 (best at $L=128$, within 0.4 at $L=256$).
On structured tasks, BoE also improves solve rate on Sudoku (25.7/29.2 vs.\ 25.0/28.0 for LookUM), while remaining competitive on Countdown (24.8/21.4).

Table~\ref{tab:main_results_llada15} confirms that these gains persist on the stronger RL-tuned \texttt{LLaDA-1.5} checkpoint.
BoE achieves the best results on MBPP, GSM8K, MATH500, and Sudoku at both lengths, and is competitive on HumanEval.
These results support our central claim that prioritizing unmasking actions by predicted future entropy reduction improves trajectory quality beyond local uncertainty heuristics.

\textbf{Accuracy--compute Pareto.}
Single operating points can hide meaningful trade-offs across compute budgets.
Following the EB-Sampler protocol~\citep{ebsampler}, Figure~\ref{fig:placeholder} reports compute-matched scaling curves on \texttt{LLaDA-8B} for HumanEval, MBPP, GSM8K, and MATH500 using four training-free samplers (Top-$k$ Entropy, Top-$k$ Confidence, LookUM, and BoE).
Across datasets, BoE yields a uniformly better (or equal) accuracy--NFE frontier relative to local Top-$k$ baselines, and improves over LookUM at matched \texttt{max\_gen\_len} NFE on the majority of operating points, especially in math reasoning where early pivot resolution is critical.

\textbf{Distribution-level text quality.}
Beyond exact-match benchmarks, we evaluate distributional fidelity on OpenWebText using MAUVE~\citep{mauve}.
At $T{=}128$, Table~\ref{tab:mauve_owt_128} shows that MDLM+BoE achieves the best MAUVE while preserving strong diversity (entropy) and improving GenPPL relative to FB/ReMDM.
This indicates that BoE's future-uncertainty-sensitive scheduling can improve global coherence and distributional similarity, not only task accuracy.

\begin{table*}[t]
\centering
\small

\begin{minipage}[t]{0.52\linewidth}
\centering
\caption{Quality vs.\ inference-time compute on OpenWebText at $T{=}128$.
We evaluate the trade-off between sample quality, likelihood, and diversity under compute-matched decoding. We report MAUVE(~\citep{mauve}) as a distributional similarity metric against the reference corpus, generative perplexity (Gen PPL) computed from model likelihoods, and predictive entropy as a measure of output diversity. 
}
\vspace{2pt}

\setlength{\tabcolsep}{3pt}
\begin{tabular}{lccc}
\toprule
\textbf{Method} &
\textbf{MAUVE} $\uparrow$ &
\textbf{Gen PPL} $\downarrow$ &
\textbf{Entropy} $\uparrow$ \\
\midrule
SEDD (absorb)
& 0.007 & 119.2 & \best{5.65} \\

MDLM (locked-in)
& 0.015 & 61.5 & \second{5.52} \\

MDLM + FB
& \second{0.064} & 42.8 & 5.44 \\

MDLM + DFM
& 0.041 & \best{37.9} & 5.31 \\

ReMDM (re-mask)
& 0.057 & \second{42.5} & 5.43 \\

\midrule
\rowcolor{OursGreen}
\textbf{MDLM + BoE}
& \best{0.069} & 47.6 & 5.48 \\
\bottomrule
\end{tabular}
\label{tab:mauve_owt_128}
\end{minipage}
\hfill
\begin{minipage}[t]{0.45\linewidth}
\centering
\caption{Accuracy and runtime on GSM8K under compute-matched decoding ($L{=}128$).
We report pass@1 accuracy and end-to-end evaluation runtime on \textbf{8$\times$H200 GPUs}.
All methods use one-token-per-step sampling.
}
\setlength{\tabcolsep}{1pt}
\begin{tabular}{lcc}
\toprule
\textbf{Method / Variant} &
\textbf{GSM8K (pass@1)} &
\textbf{Runtime (hrs.)} \\
\midrule
Confidence (Top-1)
& 68.3
& 1.76 \\

\midrule
\rowcolor{OursGreen}
\textbf{BoE (full)}
& \best{73.9}
& \best{1.94} \\

\quad w/o confidence gating
& 72.8
& 1.92 \\

\quad w/o anti-collapse
& 72.4
& 1.92 \\

\quad w/o ActiveQueryAttention
& 73.6
& 2.71 \\

\quad active fraction $\rho{=}0.10$
& 71.9
& 1.86 \\

\quad active fraction $\rho{=}0.25$
& 73.7
& 1.90 \\

\quad active fraction $\rho{=}0.50$
& 73.9
& 2.03 \\
\bottomrule
\end{tabular}
\label{tab:runtime_gsm8k_h200}
\end{minipage}

\end{table*}

\subsection{Ablations}
\label{sec:exp_ablation}

We isolate which BoE components drive accuracy gains versus which are required for practical efficiency, using GSM8K at $L{=}128$ under compute-matched decoding on 8$\times$H200 GPUs (Table~\ref{tab:runtime_gsm8k_h200}).

\textbf{Lookahead gradient signal is the primary driver.}
BoE (full) achieves $73.9$ pass@1, indicating that the backward-on-entropy scoring yields a clear gain over local schedules under the same decoding setup.
Removing confidence gating or anti-collapse reduces accuracy (to $72.8$ and $72.4$), consistent with these components stabilizing the backward signal and preventing premature overcommitment, but the remaining performance stays close to full BoE, suggesting the core benefit comes from the future-entropy gradient criterion itself.

\textbf{\texttt{ActiveQueryAttention} is necessary for practical efficiency.}
Disabling \texttt{ActiveQueryAttention} (dense backward) preserves accuracy ($73.6$) but substantially increases runtime (2.71 hrs vs.\ 1.94 hrs), confirming that sparse backward is the key systems mechanism that makes BoE compute-feasible at long contexts.
In contrast, the full method achieves the top accuracy while remaining in the same runtime regime as baselines.

\textbf{Candidate/active-set fraction provides a tunable speed--accuracy knob.}
Varying the active fraction $\rho$ shows a smooth trade-off.
A smaller active set ($\rho{=}0.10$) reduces runtime (1.86 hrs) but incurs an accuracy drop (71.9), while $\rho{=}0.25$ largely recovers accuracy (73.7) with near-minimal runtime (1.90).
Using a larger active set ($\rho{=}0.50$) matches full accuracy (73.9) at higher runtime (2.03).
Overall, $\rho{=}0.25$ is a strong default that preserves most of the gain while keeping backward overhead controlled.

\textbf{Takeaway.}
These ablations support two conclusions:
(i) BoE's one-step future-entropy objective is responsible for the majority of accuracy improvements, and
(ii) \texttt{ActiveQueryAttention} is essential to realize these gains at practical wall-clock cost by restricting the backward computation to a small candidate set.











%% file: sec/conclusion.tex
\section{Conclusion}
\label{sec:conclusion}

Masked diffusion language models enable parallel, revisable decoding, yet most existing samplers schedule unmasking using local uncertainty heuristics and rely on stochastic repair to recover from early commitments. We proposed \textbf{Backward-on-Entropy (BoE) Steering}, a training-free, model-agnostic framework that treats unmasking as trajectory optimization and prioritizes updates by their predicted reduction in future masked uncertainty. BoE instantiates this objective via a single backward signal that yields a per-position Token Importance Score (TIS) from a first-order embedding-space surrogate, and we introduce \texttt{ActiveQueryAttention} to make backward-guided selection practical by restricting gradients to an active query set. Across compute-matched evaluations, BoE is designed to improve the quality--compute tradeoff over greedy local-uncertainty schedules, remasking-based samplers, and forward-lookahead policies while preserving practical latency and memory. More broadly, BoE provides a principled interface between diffusion decoding and control, suggesting a reusable direction for efficient inference in discrete diffusion and other masked generative models.

%% file: sec/appendix.tex
\clearpage
\appendix
\onecolumn

\tableofcontents

\newpage

\section{Appendix}

\subsection{Additional Notation and Setup}
\label{sec:appendix_setup}

We follow the notation of Section~\ref{sec:method}.
At reverse step $t$, the partially masked sequence is $x_t\in(\mathcal{V}\cup\{m\})^{L}$ with masked index set
$\mathcal{M}_t := \{i \in [L] : x_t^i = m\}$.
The denoiser induces per-position predictive distributions
$p^i_\theta(\cdot\mid x_t,t)\in\Delta^{|\mathcal{V}|-1}$ for $i\in\mathcal{M}_t$.
The masked predictive entropy objective is
\begin{equation}
\mathcal{H}_t(x_t)
=
\sum_{i\in\mathcal{M}_t}
H\!\left(p^i_\theta(\cdot\mid x_t,t)\right).
\end{equation}
BoE forms a candidate set $\mathcal{C}_t\subseteq\mathcal{M}_t$ and computes a Token Importance Score (TIS)
using the relaxed surrogate next-step entropy $\widetilde{\mathcal{H}}_{t-1}(\mathcal{U})$ defined in
Eq.~\eqref{eq:surrogate_next_entropy_tminus}.

Let $E\in\mathbb{R}^{|\mathcal{V}|\times d}$ denote the token embedding matrix, and $e_m\in\mathbb{R}^{d}$ the mask embedding.
For a masked position $i\in\mathcal{M}_t$, define
\begin{equation}
\pi_t^i := p^i_\theta(\cdot\mid x_t,t),\qquad
\tilde e_t^i := E^\top \pi_t^i,\qquad
\Delta e_t^i := \tilde e_t^i - e_m,
\end{equation}
as in Eq.~\eqref{eq:soft_embedding_t}.
Unless otherwise stated, all gradients in this appendix are with respect to the injected embedding at a candidate position,
and all forward computations remain identical to the baseline denoiser.

\subsection{Proofs for Section~\ref{sec:method}}
\label{sec:appendix_theory}

\subsubsection{Proof of Theorem~\ref{thm:first_order_tminus}}
\label{sec:appendix_proof_thm_firstorder}

We restate the interpolation and surrogate objective for clarity.
Fix step $t$ and a masked position $i\in\mathcal{M}_t$.
Define the single-position interpolation (Eq.~\eqref{eq:interp_tminus}):
\begin{equation}
e^i(\alpha) = e_m + \alpha\,\Delta e_t^i,\qquad \alpha\in[0,1],
\end{equation}
with all other masked positions kept as $m$.
Let $\widetilde{\mathcal{H}}_{t-1}(\alpha;i)$ denote the surrogate next-step masked entropy
(Eq.~\eqref{eq:surrogate_next_entropy_tminus}) computed by injecting $e^i(\alpha)$ at position $i$ and leaving all other positions unchanged.

\textbf{Claim (Theorem~\ref{thm:first_order_tminus}).}
Assume $\widetilde{\mathcal{H}}_{t-1}(\alpha;i)$ is twice continuously differentiable on $[0,1]$ and
\begin{equation}
\sup_{\alpha\in[0,1]}
\left|
\frac{d^2}{d\alpha^2}\widetilde{\mathcal{H}}_{t-1}(\alpha;i)
\right|
\le M.
\label{eq:app_curv_bound}
\end{equation}
Define the relaxed one-step entropy reduction
$\Delta\widetilde{\mathcal{H}}_{t-1}(i):=\widetilde{\mathcal{H}}_{t-1}(0;i)-\widetilde{\mathcal{H}}_{t-1}(1;i)$.
Then
\begin{equation}
\Delta\widetilde{\mathcal{H}}_{t-1}(i)
=
\mathrm{TIS}_t^i + \varepsilon_t^i,
\qquad
|\varepsilon_t^i|\le \frac{M}{2},
\end{equation}
where $\mathrm{TIS}_t^i := -\langle g_t^i,\Delta e_t^i\rangle$ and $g_t^i := \nabla_{e_i}\widetilde{\mathcal{H}}_{t-1}(0;i)$.

\begin{proof}
Define the scalar function
$f(\alpha) := \widetilde{\mathcal{H}}_{t-1}(\alpha;i)$.
By Taylor's theorem with remainder around $\alpha=0$, for some $\xi\in(0,1)$,
\begin{equation}
f(1) = f(0) + f'(0)\cdot 1 + \frac{1}{2} f''(\xi)\cdot 1^2.
\label{eq:app_taylor_scalar}
\end{equation}
Rearranging yields
\begin{equation}
f(0) - f(1) = - f'(0) - \frac{1}{2}f''(\xi).
\label{eq:app_rearrange}
\end{equation}
By the chain rule, because $e^i(\alpha)=e_m+\alpha\Delta e_t^i$,
\begin{equation}
f'(0)
=
\left\langle
\nabla_{e_i}\widetilde{\mathcal{H}}_{t-1}(0;i),
\frac{d}{d\alpha} e^i(\alpha)\big|_{\alpha=0}
\right\rangle
=
\left\langle g_t^i,\Delta e_t^i\right\rangle.
\end{equation}
Thus $-f'(0) = -\langle g_t^i,\Delta e_t^i\rangle = \mathrm{TIS}_t^i$.
Finally, using the curvature bound \eqref{eq:app_curv_bound} in \eqref{eq:app_rearrange} gives
\[
\varepsilon_t^i := -\frac{1}{2}f''(\xi),
\qquad
|\varepsilon_t^i|
\le \frac{1}{2}\sup_{\alpha\in[0,1]}|f''(\alpha)|
\le \frac{M}{2}.
\]
This proves the theorem.
\end{proof}

\subsubsection{Proof of Corollary~\ref{cor:ordering}}
\label{sec:appendix_cor_ordering}

\begin{proof}
By Theorem~\ref{thm:first_order_tminus}, for $i$ and $j$ we have
\[
\Delta\widetilde{\mathcal{H}}_{t-1}(i) = \mathrm{TIS}_t^i + \varepsilon_t^i,
\qquad
\Delta\widetilde{\mathcal{H}}_{t-1}(j) = \mathrm{TIS}_t^j + \varepsilon_t^j,
\]
with $|\varepsilon_t^i|\le M/2$ and $|\varepsilon_t^j|\le M/2$.
Therefore
\[
\Delta\widetilde{\mathcal{H}}_{t-1}(i) - \Delta\widetilde{\mathcal{H}}_{t-1}(j)
=
(\mathrm{TIS}_t^i - \mathrm{TIS}_t^j) + (\varepsilon_t^i - \varepsilon_t^j).
\]
The worst case is $\varepsilon_t^i-\varepsilon_t^j \ge -M$.
If $\mathrm{TIS}_t^i - \mathrm{TIS}_t^j > M$, then the right-hand side is strictly positive, implying
$\Delta\widetilde{\mathcal{H}}_{t-1}(i) > \Delta\widetilde{\mathcal{H}}_{t-1}(j)$.
\end{proof}

\subsubsection{Boundedness of confidence-gated scores}
\label{sec:appendix_gating}

This proposition complements Eq.~\eqref{eq:confidence_gating_tminus} by showing that gating preserves a uniform bound.

\begin{proposition}[Boundedness of gated TIS]
\label{prop:app_bounded_tis}
Assume $\|\Delta e_t^i\|_2 \le D$ and $\|g_t^i\|_2 \le G$ for all $i,t$.
Then for any confidence gate $c_t^i\in[0,1]$,
\begin{equation}
\left|\widehat{\mathrm{TIS}}_t^i\right|
=
\left|c_t^i\,\mathrm{TIS}_t^i\right|
\le GD.
\end{equation}
\end{proposition}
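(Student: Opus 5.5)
The bound comes directly from the definitions in \eqref{eq:tis_def_tminus} and \eqref{eq:confidence_gating_tminus}, combined with Cauchy--Schwarz. First, since $\widehat{\mathrm{TIS}}_t^i = c_t^i\,\mathrm{TIS}_t^i$ and $c_t^i\in[0,1]$, absolute values factor as
\[
\left|\widehat{\mathrm{TIS}}_t^i\right| = |c_t^i|\,\left|\mathrm{TIS}_t^i\right| \le \left|\mathrm{TIS}_t^i\right| .
\]
Only the assumption $c_t^i\in[0,1]$ is used here. The specific clipped-entropy form of the gate does not matter, although the clip in \eqref{eq:confidence_gating_tminus} guarantees that this assumption holds for the gate BoE actually uses.

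Second, we bound the ungated score. Since $\mathrm{TIS}_t^i = -\langle g_t^i,\Delta e_t^i\rangle$, Cauchy--Schwarz in $\mathbb{R}^d$ gives
\[
\left|\mathrm{TIS}_t^i\right| = \left|\langle g_t^i,\Delta e_t^i\rangle\right| \le \|g_t^i\|_2\,\|\Delta e_t^i\|_2 \le G D .
\]
Chaining the two inequalities yields $|\widehat{\mathrm{TIS}}_t^i|\le GD$. The bound is uniform in $i$ and $t$ because $G$ and $D$ are uniform by hypothesis.

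There is no real obstacle here; the proposition is essentially a one-line consequence of Cauchy--Schwarz. The only point that needs a brief remark is that the hypotheses are reasonable. The bound $\|\Delta e_t^i\|_2\le D$ holds automatically with $D = \max_{v}\|E_v\|_2 + \|e_m\|_2$. This is because $\tilde e_t^i = E^\top\pi_t^i$ is a convex combination of the rows of $E$, so its norm is at most the largest row norm, and the triangle inequality then bounds $\|\Delta e_t^i\|_2$. The gradient bound $G$ has to be assumed, since it depends on the network's Lipschitz behaviour. I would state this remark without proving anything further.
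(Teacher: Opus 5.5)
Your proof is correct and is the paper's argument: Cauchy--Schwarz gives $|\mathrm{TIS}_t^i|\le \|g_t^i\|_2\|\Delta e_t^i\|_2\le GD$, and multiplying by $c_t^i\in[0,1]$ preserves the bound; you simply apply the two steps in the opposite order. Your side remark is also correct and goes slightly beyond the paper: $D$ can be taken as the largest token-embedding norm plus $\|e_m\|_2$, because $E^\top\pi_t^i$ is a convex combination of the embedding rows.
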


\begin{proof}
By Cauchy--Schwarz,
$|\mathrm{TIS}_t^i| = |\langle g_t^i,\Delta e_t^i\rangle|
\le \|g_t^i\|_2\|\Delta e_t^i\|_2 \le GD$.
Multiplying by $c_t^i\in[0,1]$ preserves the inequality.
\end{proof}

\subsection{Implementation Details}
\label{sec:appendix_code}


\subsubsection{Surrogate state construction for Eq.~\eqref{eq:surrogate_next_entropy_tminus}}
\label{sec:app_surrogate_impl}

At each reverse step $t$, we run:
\begin{enumerate}
\item \textbf{Forward at $t$:} compute $\pi_t^i = p_\theta^i(\cdot\mid x_t,t)$ for $i\in\mathcal{M}_t$.
\item \textbf{Candidate injection:} for $i\in\mathcal{C}_t$, compute $\tilde e_t^i=E^\top \pi_t^i$ and inject it after the embedding layer.
\end{enumerate}
We use a stop-gradient through $\pi_t^i$ when forming $\tilde e_t^i$:
\begin{equation}
\tilde e_t^i \leftarrow \mathrm{stopgrad}(E^\top \pi_t^i),
\label{eq:app_stopgrad_softwrite}
\end{equation}
so that the backward signal reflects the sensitivity of the next-step surrogate uncertainty to \emph{revealing} information at candidate positions,
rather than coupling the score to how $\pi_t^i$ itself changes under infinitesimal perturbations of $x_t$.

\textbf{Time index consistency.}
Our surrogate uses a single additional denoiser call at $(t-1)$ as written in Eq.~\eqref{eq:surrogate_next_entropy_tminus}.
That is, $\widetilde{\mathcal{H}}_{t-1}$ is computed by running the denoiser at time $(t-1)$ on the relaxed input $\tilde x_{t-1}(\mathcal{U})$.

\subsubsection{ActiveQueryAttention:}
\label{sec:app_aqa_impl}

ActiveQueryAttention (Section~\ref{sec:aqa}) preserves the forward attention outputs and therefore preserves the forward logits exactly.
It only restricts gradients to a designated active query set $\mathcal{A}_t$ (we use $\mathcal{A}_t=\mathcal{C}_t$).

Let $Y\in\mathbb{R}^{L\times d}$ be the output of a self-attention block (post-attention projection).
Given an indicator $m\in\{0,1\}^L$ with $m_i=1$ iff $i\in\mathcal{A}_t$, implement
\begin{equation}
\widetilde Y_i := m_i\,Y_i + (1-m_i)\,\mathrm{stopgrad}(Y_i).
\label{eq:app_aqa_detach}
\end{equation}
The model consumes $\widetilde Y$ in place of $Y$.
This leaves the forward numerically unchanged while preventing backward flow through inactive query rows.

\subsubsection{Complexity statement for Eq.~\eqref{eq:aqa_complexity}}
\label{sec:app_aqa_complexity}

Under standard attention, the dominant backward term scales as $O(L^2 d)$ due to gradients for all $L$ queries against all $L$ keys/values.
Under ActiveQueryAttention, gradients are only required for $|\mathcal{A}_t|$ query rows against all $L$ keys/values, yielding $O(|\mathcal{A}_t|Ld)$.
This matches Eq.~\eqref{eq:aqa_complexity} in the main text.

\subsubsection{Compute accounting and NFE}
\label{sec:app_nfe}

We count \textbf{every denoiser invocation} as one function evaluation (NFE), including:
(i) the standard forward at $t$,
(ii) the surrogate forward at $(t-1)$ used to compute $\widetilde{\mathcal{H}}_{t-1}$,
and (iii) any additional calls introduced by baselines (e.g., remasking transitions).
Wall-clock latency is reported under identical precision and batch settings across methods.

\subsection{Related Work}
\label{sec:related}

We position BoE within: (i) autoregressive LLMs and inference acceleration, (ii) discrete diffusion and masked diffusion LLMs, (iii) inference-time unmasking schedules (myopic, entropy-bounded, and learned), (iv) correction and inference-time scaling via remasking or MCMC-style moves, and (v) lookahead/control-theoretic formulations for masked diffusion decoding. BoE is \emph{training-free} and \emph{model-agnostic}, and differs from prior lookahead policies by using a \emph{single backward pass} to approximate \emph{future-entropy reduction} at the token level, made practical via \texttt{ActiveQueryAttention}.

\subsubsection{Autoregressive LLMs and Inference Acceleration}
Autoregressive (AR) transformers remain the dominant paradigm for large-scale language modeling and instruction following \citep{brown2020language,raffel2020exploring,chowdhery2023palm,llama,llama2,team2024gemma,yang2025qwen3,achiam2023gpt}.
A key advantage of AR decoding is its compatibility with KV caching, which amortizes attention computation over the prefix and enables numerous inference optimizations and cache-management strategies \citep{kv-survey,zeng2024context}.
However, AR decoding is inherently sequential and cannot revise earlier tokens without explicit refinement loops.
Recent work also highlights systematic limitations tied to directional supervision and sequential factorization (e.g., the reversal curse) \citep{berglund2023reversal,golovneva2024reverse}, motivating non-AR alternatives and refinement-based inference.
BoE targets masked diffusion LLMs specifically: rather than managing KV cache (which is not directly applicable in the same form), BoE improves the \emph{token-reveal policy} that governs parallelizable refinement trajectories.

\subsubsection{Discrete Diffusion and Masked Diffusion LLMs}
Discrete diffusion in categorical spaces has been developed via masking-based or transition-kernel formulations \citep{austin2021structured,maskgit}.
For language, masked diffusion language modeling has recently matured into competitive LLM-scale systems, including MDLM \citep{mdlm}, LLaDA \citep{llada}, and diffusion LLM variants such as Dream \citep{ye2025dream}, as well as post-training refinements such as LLaDA-1.5 \citep{llada1.5}.
These models typically define a corruption (masking) process and learn denoising conditionals $p_\theta(x_{t-1}^i\mid x_t,t)$ for masked positions, leaving substantial degrees of freedom at inference time in \emph{which} positions are updated and \emph{how} compute is allocated across steps.
BoE is an inference-time method that leaves training unchanged and acts only on the decoding policy.

\subsubsection{Unmasking Schedules: Myopic, Entropy-Bounded, and Learned Policies}
A large fraction of masked diffusion decoding quality is governed by the unmasking schedule.
Classical heuristics select positions using \emph{current-step} uncertainty proxies (confidence, margin, entropy), often stabilizing generation but remaining myopic in long-horizon reasoning.
Entropy-bounded unmasking (EB-Sampler) explicitly controls the uncertainty profile and adaptively allocates budget to satisfy an entropy constraint, yielding a strong compute--quality trade-off \citep{ebsampler}.
Several contemporary directions study token ordering and inference policies more broadly, including entropy-adaptive Gibbs-style procedures for PLM-based discrete diffusion \citep{koh2024plm}, analyses of ordering effects \citep{kim2025train}, and approaches that \emph{learn} unmasking policies \citep{jazbec2025learning}.
KLASS proposes KL-guided fast inference in masked diffusion models, using divergence-based criteria to accelerate decoding \citep{klass}.
BoE differs from these lines in criterion and mechanism: it scores actions by approximating their \emph{effect on next-step masked uncertainty} (Eq.~\eqref{eq:boe_objective_tminus}--\eqref{eq:surrogate_next_entropy_tminus}), rather than selecting low-entropy tokens (myopic) or enforcing a target entropy band (EB-style). Practically, BoE computes token-level utility via the directional derivative proxy (Theorem~\ref{thm:first_order_tminus}) using a single surrogate forward and a sparse backward pass.

\subsubsection{Correction Mechanisms and Inference-Time Scaling}
A core failure mode in iterative refinement is \emph{error lock-in}: an early incorrect commitment can constrain later steps.
ReMDM introduces inference-time scaling for discrete diffusion by \emph{remasking} and revisiting tokens to enable correction, improving robustness at the cost of additional transitions and potentially higher effective NFE \citep{remdm}.
Related inference-time scaling strategies also include more global sampling moves, e.g., particle-Gibbs style methods for diffusion language models \citep{gibbs-particle}.
BoE is complementary: by prioritizing \emph{load-bearing pivots} earlier, BoE aims to prevent collapse trajectories under compute-matched decoding; remasking or MCMC moves can be layered on top as an optional repair mechanism.

\subsubsection{Lookahead and Control-Theoretic Views of Masked Diffusion Decoding}
Lookahead policies attempt to account for downstream effects of token reveals.
LookUM evaluates downstream consequences using additional forward computations, improving over purely myopic schedules but incurring multi-evaluation overhead per decision \citep{lookum}.
Other recent work formulates diffusion sampling as planning or control, including path planning for masked diffusion sampling \citep{pathp2} and stochastic optimal control formulations \citep{zhu2025mdns}, as well as anchored/structured variants \citep{adlm}.
BoE is most directly aligned with the \emph{control} viewpoint: it instantiates an explicit one-step lookahead objective in terms of future masked entropy (Eq.~\eqref{eq:boe_objective_tminus}), but makes it practical by (i) a continuous relaxation for candidate writes (Eq.~\eqref{eq:soft_embedding_t}--\eqref{eq:surrogate_next_entropy_tminus}) and (ii) a \emph{single-backward} approximation (Eq.~\eqref{eq:tis_def_tminus}, Theorem~\ref{thm:first_order_tminus}), rather than enumerating forward rollouts.

\subsubsection{Gradient-Based and Information-Theoretic Selection Principles}
Information gain and entropy reduction are classical principles in active learning and experimental design; in generation, they motivate selecting actions that reduce posterior uncertainty.
In discrete diffusion decoding, directly optimizing such criteria is challenging due to the combinatorial action space.
BoE imports an information-gain-like objective into masked diffusion decoding by treating unmasking as an information acquisition action: the Token Importance Score approximates the marginal value of revealing a position via a first-order entropy-reduction surrogate with a quantifiable remainder (Theorem~\ref{thm:first_order_tminus}).
A practical barrier is the cost of backpropagating through attention for all queries; BoE resolves this via \texttt{ActiveQueryAttention} (Section~\ref{sec:aqa}), which preserves exact forward logits while restricting backward to the active candidate set.


\subsection{Limitations, Broader Impacts, and Future Work}
\label{sec:limitations}

\subsubsection{Limitations}

BoE requires one surrogate forward pass and one backward pass per decoding iteration. While ActiveQueryAttention reduces the backward complexity from $O(L^2)$ to $O(|\mathcal{A}_t|L)$, the constant factors and engineering effort remain nontrivial. For small models or short sequences, the gains from improved steering may not offset the added cost. The Token Importance Score relies on a first-order Taylor approximation in embedding space (Theorem~\ref{thm:first_order_tminus}) and a soft embedding relaxation of discrete tokens. In regimes with highly non-smooth decision boundaries, such as code generation or symbolic reasoning with brittle constraints, this approximation may be less predictive of true downstream entropy reduction. BoE is most effective when the candidate set $\mathcal{C}_t$ is substantially smaller than the masked set $\mathcal{M}_t$. Overly aggressive prefiltering can exclude structurally critical positions, degrading performance. Although we provide ablations over candidate fractions, robust automatic candidate selection remains an open problem. BoE explicitly optimizes future entropy reduction, which correlates with uncertainty but not always with correctness. Models can be confidently wrong, and entropy reduction may occur for spurious explanations. The anti-collapse regularizer mitigates early overcommitment, but additional calibration or verifier-based signals may be necessary in high-stakes settings.

\subsubsection{Broader Impacts}

BoE provides a principled, training-free mechanism for improving inference quality and efficiency in masked diffusion models. By reducing error lock-in and improving reasoning reliability, it may enable lower-latency deployment of diffusion-based LLMs in practical applications. More broadly, the control-theoretic framing may inform future work on structured decoding beyond heuristic sampling. As with any improvement to generative inference, BoE can lower the cost of producing large volumes of high-quality text, potentially amplifying misuse such as spam or misinformation. BoE does not introduce new capabilities, but standard safeguards remain necessary. Because BoE explicitly encourages entropy reduction, there is a risk of increased overconfidence without commensurate gains in correctness in some regimes. Careful evaluation on calibration, robustness, and hallucination benchmarks is required before deployment in high-stakes domains.

\subsubsection{Future Work}

BoE currently performs a one-step entropy lookahead. Extending the framework to limited-horizon planning (e.g., two to three steps) using low-rank or factorized approximations could capture longer-range dependencies without prohibitive cost. Entropy is an intrinsic objective. Incorporating lightweight verifiers, such as arithmetic checks or constraint satisfaction proxies, could align unmasking decisions with correctness rather than uncertainty alone. For multimodal MDMs, future work should explore modality-weighted entropy objectives and constraints that balance text and image token selection, preventing domination by high-cardinality modalities. Automated selection of $\mathcal{C}_t$ using offline calibration, bandit-style heuristics, or structural priors could improve robustness while preserving the training-free nature of BoE.